\definecolor{razzmatazz}{HTML}{e3256b}
\theoremstyle{plain}
\newtheorem{theorem}{Theorem}
\newtheorem{proposition}[theorem]{Proposition}
\newtheorem{corollary}[theorem]{Corollary}
\theoremstyle{definition}
\newtheorem{definition}[theorem]{Definition}
\newtheorem{assumption}[theorem]{Assumption}
\theoremstyle{remark}
\newcommand{\sS}{\mathcal{S}}
\newcommand{\sA}{\mathcal{A}}
\newcommand{\sTS}{{\mathcal{T}}}
\newcommand{\sES}{{\mathcal{S}}}
\newcommand{\sTA}{{\mathcal{C}}}
\newcommand{\sEA}{{\mathcal{A}}}
\newcommand{\ts}{\tau}
\newcommand{\es}{s}
\newcommand{\ta}{c}
\newcommand{\ea}{a}
\newcommand{\rvts}{\tau}
\newcommand{\rves}{S}
\newcommand{\rvta}{C}
\newcommand{\rvea}{A}
\newcommand{\ttrans}{p_{\sTS}}
\newcommand{\generica}{{x}}
\title{When Can Model-Free Reinforcement Learning be Enough for Thinking?}
\author{%
 Josiah P.~Hanna \\
 Computer Sciences Department \\
 University of Wisconsin -- Madison \\
 % Madison, Wisconsin, U.S.A. \\
 \texttt{jphanna@cs.wisc.edu}
  % David S.~Hippocampus\thanks{Use footnote for providing further information
  %   about author (webpage, alternative address)---\emph{not} for acknowledging
  %   funding agencies.} \\
  % Department of Computer Science\\
  % Cranberry-Lemon University\\
  % Pittsburgh, PA 15213 \\
  % \texttt{hippo@cs.cranberry-lemon.edu} \\
  % examples of more authors
  \And
  Nicholas E.~Corrado \\
 Computer Sciences Department \\
 University of Wisconsin -- Madison \\
 % Madison, Wisconsin, U.S.A. \\
 \texttt{ncorrado@wisc.edu}
}
\begin{document}

\maketitle

\begin{abstract}
Recent work on large language models has demonstrated the use of model-free reinforcement learning (RL) to train reasoning-like capabilities. The emergence of "thinking" through model-free RL is interesting as thinking actions neither produce reward nor change the external world state to one where the agent is more likely to get reward. This paper seeks to build a domain-independent understanding of when model-free RL will lead to such "thinking" as a strategy for reward maximization. To build this understanding, we first introduce a theoretical model which we call a \textit{thought Markov decision process} (MDP). Thought MDPs minimally extend the classical MDP model to include an abstract notion of thought state and thought action. Using the thought MDP model, we prove the importance of policy initialization in determining whether or not thinking emerges and show formally that thought actions are equivalent to the agent choosing to perform a step of policy improvement before continuing to act. We then show that open-source LLMs satisfy the conditions that our theory predicts are necessary for model-free RL to produce thinking-like behavior. Finally, we hypothesize sufficient conditions that would enable thinking to be learned outside of language generation and introduce a toy domain where a combination of multi-task pre-training and designated thought actions enable more data-efficient RL compared to non-thinking agents.
\end{abstract}

\section{Introduction}

% Can model-free reward-driven learning give rise to System 2 processing? 
Dual process theory divides cognitive processing into System 1 and System 2 processing \citep{wason1974dual,kahneman2003maps}. System 1 processing is fast, effortless, but potentially imprecise, whereas System 2 is slow, effortful, but potentially produces better decisions. In the field of computational reinforcement learning (RL), model-free learning is reminiscent of System 1 behavior, whereas model-based decision-time planning is reminiscent of System 2. While dual process theory is a popular model for understanding human cognition \citep{kahneman2011thinking}, there is no single clear candidate architecture for its duplication in artificial agents. 

In the context of large language models (LLMs), recent work by \citeauthor{guo2025deepseek} proposed a unified model whereby System 2 type “thinking” emerged as a consequence of model-free RL applied to solve mathematics problems (\citeyear{guo2025deepseek}). While this ``thinking" appears to resemble thoughts that a human would have when answering a given query, these outputs arise solely as subservient to reward maximization.  The approach is interesting as it suggests that a form of System 2 processing can emerge when we view thinking as control and reinforce thought patterns that lead to reward. Our objective is to develop a domain-independent understanding of the conditions under which model-free RL will select for thinking behavior. Specifically, we want to answer the question: 
\begin{center}
\textit{Under what conditions will model-free reinforcement learning give rise to thinking as a strategy for reward maximization?}    
\end{center}
Informally, we define thinking to be actions that do not directly produce reward or affect the external state of an agent's environment but that lead the agent to take a course of action that increases the reward it will receive in the future.
To answer our research question, we first formulate a minimal extension of the classical MDP model to explicitly model thought actions and a notion of a controllable thought state. We then show how policy initialization plays a central role in whether policy iteration will select thought actions. Under our theoretical model, we will show that thinking can be viewed as selecting between a set of sub-policies that are already contained in the learning agent’s policy function and that thought actions can be interpreted as the agent choosing to run one or more steps of policy improvement before continuing to act.
We then discuss how LLMs instantiate our thought MDP formalism and provide empirical evidence that they exhibit the necessary conditions for thinking to emerge.
Our final contribution is to introduce a simple domain and multi-task pre-training set-up that induces the condition under which model-free RL will discover thinking behavior. This simple domain provides a basis for future work studying agents that learn to think and act.
We conclude by discussing open questions and directions for future work raised by this model of deliberative thinking in RL.

\section{Related Work}

In this section, we discuss the prior literature related to System 2 processing in RL as well as prior models of ``mental" actions an agent can take.

\paragraph{Decision-time Planning}
% 1. Planning, particularly with value equivalence models
A hallmark of System 2 processing is deliberative planning as opposed to reflexive action.
Thus, a natural way to instantiate System 2 in artificial agents is to use decision-time planning, e.g., Monte Carlo tree search \citep{kocsis2006bandit}, and many works in the RL literature have used this approach \citep[e.g.,][]{anthony_thinking_2017, silver_mastering_2016, silver_mastering_2017, silver_reinforcement_2009, schrittwieser_online_2021, shah_non-asymptotic_2020}.
Decision-time planning is particularly effective when the agent's policy or value function is sub-optimal but the agent possesses an accurate model of state transitions \citep{sutton_2018_reinforcement}.
In this case, planning can be viewed as focused policy improvement to improve the decision at the agent's current state before it acts.
As we will show, the idea of thinking as local policy improvement also applies when an RL agent learns to think in our new thought MDP model.
The key difference is that we consider a more abstract model of thinking without any notion of forward prediction and search.
Furthermore, there is also no natural answer to the question of when to stop planning and act with most decision-time planning methods.
MCTS is an anytime algorithm and the general practice is to allow as much search is possible in the application domain \citep{silver_mastering_2016,silver_mastering_2017}.
On the other hand, model-free learning in our thought MDP model will directly relate the decision about when to stop thinking to the objective of return maximization.
Some work in the (non-learning) planning literature has also studied the question of when to think vs.\ when to act \citep{cashmore_replanning_2019,coles_planning_2024}.

\paragraph{Learning to Plan}
As neural networks are general-purpose function approximators, they can, in principle, learn algorithmic planning or reasoning computations and prior work has attempted to induce planning capabilities through specialized architectures \citep{tamar_value_2017,farquhar_treeqn_2018,guez_investigation_2019,guez_learning_2018,niu_generalized_2018,weber_imagination-augmented_2018,sykora_multi-agent_2020,schleich_value_2019,oh_value_2017}.
\citeauthor{guez_investigation_2019} showed that planning-like computations could emerge through model-free learning (\citeyear{guez_investigation_2019}) and \cite{bush_interpreting_2025} applied mechanistic interpretability approaches to show that the approach of \cite{guez_investigation_2019} did in fact learn planning computations (i.e., the network internally learned to propose and evaluate future candidate action sequences).
While such works also show that model-free RL can produce thinking behavior, the main difference is that thinking refers to the computation done by the agent's policy, whereas we study thinking as an action selected by the agent's policy.
More similar to this conception of thinking, the thinker architecture (\cite{chung_thinker_2023,wang_dynamic_2025}) trains a policy to select actions both for environment interaction and planning in a learned model.
In contrast, we consider a model-free conception of thinking.
% However, they conduct thinking for a fixed number of steps per environment step and thus the agent is learning \textit{how} to think rather than \textit{when} to think.
% 

\paragraph{Reasoning in LLMs}
% 2. Reasoning in LLMs
In the past couple of years, a large number of methods have been developed to create System 2 capabilities in LLMs, with particular emphasis on math and coding problems.
One prominent approach is chain-of-thought (CoT) prompting in which a user changes their query to include explicit examples of correct responses \citep{wei_chain--thought_2023}. A more basic variant of CoT is just to prompt the model to ``think step by step" \citep{kojima_large_2023}.
Many other variations have also been proposed \citep[e.g.,][]{yao_tree_2023,yao_react_2023, wang_self-consistency_2023}.
Another paradigm has been to augment output generation with an explicit search procedure \citep[e.g.,][]{khanov_args_2024,liu_dont_2023,zhou_language_2023,zhang_planning_2023,chen_when_2024}.
Finally, recent works have used model-free RL to train LLMs to output CoT reasoning \citep{guo2025deepseek,openai_openai_2024}.
Our work takes inspiration from these works but aims to go beyond LLMs in understanding when thinking-like behavior can emerge in RL agents.

\paragraph{Cognition as Action}
% 3. explicit thought / memory actions
The idea of thinking as a form of action has a long history in AI and RL. Minsky theorized on the mind as a society of agents whose collective actions produce cognition and action \citep{minsky1986society}. Klopf's hedonistic neuron hypothesis modelled neurons as individual RL agents \citep{klopf_hedonistic_1982}.
This hypothesis has led to a line of work studying alternative neural network architectures in which artificial neural activity is produced by the stochastic policies of simple RL agents \citep{thomas_policy_2011,kostas_asynchronous_2019,gupta_structural_2021}.
More similar to our work, some prior work has considered augmenting the agent's environment action-space with a form of mental action.
These works have focused on the challenge of memory in partially observable domains and using explicit memory read or write actions as an alternative to recurrent neural networks.
For instance, \citet{peshkin2001learning} augment an RL agent with a set of memory write actions and the contents of the memory are then provided to the agent as an additional input along with the environment state; \citet{zhang_learning_2015} extend this approach to robot manipulation tasks.
Various neural architectures have been developed with external memory that can be written to and read from \citep[e.g.,][]{graves_neural_2014,zaremba_reinforcement_2016} and \citet{oh_control_2016} explored these architectures for RL.
The thought MDP model differs in that the agent has access to a Markov state representation and so thought actions serve to manipulate the agent's policy as opposed to remembering details of the past.
Prior work has used the framework of rational meta-reasoning to study computation selection as a form of action \citep{hay_selecting_2012} and even consider using RL to learn a computation selection policy \citep{callaway_learning_2018}. These works differ in that they study choosing among concrete computations as opposed to the abstract model of thinking as manipulating the agent's internal state that we consider.

\paragraph{The Options Framework}
% 4. Options
Finally, thought MDPs are related to the options framework that is often used in hierarchical RL \citep{sutton1999between}.
In the options framework, the agent's policy is over a set of options where options are either primitive actions or sub-policies. 
The crucial difference is that in the options framework, the selection of an option and the execution of that option's first action both occur within a single time step, whereas this execution would take at least two steps in a thought MDP. 
This difference suggests that thought MDPs might naturally model the cost of switching options, particularly when the agent cannot directly set its thought state and must instead think for multiple steps to select the best option. Thus, thought MDPs might be particularly useful as an alternative to the options with deliberation cost model \citep{harb2018waiting}. 

\section{Formal Model}

In this section, we first formalize the standard RL problem using the MDP formalism and then introduce the thought MDP model to explicitly model thinking.

\subsection{RL in Markov Decision Processes}

RL environments are typically modeled as Markov decision processes (MDPs). Formally, an MDP is a tuple, $\langle \sS, \sA, p, r, \gamma \rangle$, where $\sS$ is the environment state space, $\sA$ is the set of actions that the agent can take to influence the environment state, $p:\sS \times \sA \rightarrow \Delta(\sS)$ is a stochastic state transition function, $r: \sS \times \sA \rightarrow \mathbb{R}$ is a scalar-valued reward function, and $\gamma$ is a discount factor. 
At any moment in time, the agent is in state $S_t$, takes an action, $A_t$, receives a reward, $R_t = r(S_t, A_t)$, and transitions to a new state, $S_{t+1}$. Then the process repeats from $S_{t+1}$ until a special terminal state, $s_\infty$, is reached.
The agent selects actions according to a policy, $\pi: \sS  \rightarrow \Delta(\sA)$. 
The value of using a policy from a particular state, $s$, is defined to be $v_\pi(s) \coloneqq \mathbf{E}_\pi[\sum_{t=0}^\infty \gamma^t R_t | S_0 = s]$.
In RL, the agent's objective is to find a policy that maximizes $v_\pi(s)$ in all states.

\subsection{Thought MDPs}\label{sec:thought-mdp}

We now extend the MDP model to explicitly model thinking. 
We formally define a \textit{thought MDP} as the tuple $\langle \sES, \sEA, p, r, \gamma, \sTS, \sTA, \ttrans \rangle$, where $\sES$ and  $\sEA$ are the environment's state and action space respectively and $p, r, \gamma$ are defined as they are for MDPs.
We add $\sTS$ as the set of \textit{thought states}, $\sTA$ as the set of \textit{thought actions}, and $\ttrans: \sES \times \sTS \times \sTA \rightarrow \Delta(\sTS)$ as the \textit{thought transition function}.
Recalling our informal definition of thinking in the introduction, we emphasize that thought states and actions do not affect environment state transitions and rewards.
The agent's objective remains to maximize cumulative discounted reward across all environment states.\footnote{We base the thought MDP model on the widely used discounted problem formulation. In Appendix \Cref{app:finite-horizon}, we extend the thought MDP model and theoretical results to the undiscounted, fixed-horizon setting which more closely resembles contemporary  applications of RL to LLMs.}

\paragraph{Policies in Thought MDPs}
There are different ways to define the agent's policy in a thought MDP.
In this work, we formalize policies as a mapping $\pi: \sES \times \sTS \rightarrow \Delta(\sEA \cup \sTA)$.
This choice means that the agent can select either an environment action or a thought action at any interaction time-step but not both.
We make this choice to connect to work in LLMs that inspired this model, but an alternative is that the policy is a mapping $\pi: \sES \times \sTS \rightarrow \Delta(\sEA \times \sTA)$, i.e., the agent can think and act at the same time.
Such modelling might be more appropriate for real-time domains where the agent cannot simply sit and think as the rest of the world evolves around it.
There, of course, may be other alternatives that could be considered in future work.
Below, we will use the notation $\pi(\ts)$ to refer to the agent's state-dependent policy with the thought state fixed at $\ts$.

\paragraph{Interaction in Thought MDPs}
In a thought MDP, interaction proceeds as follows.
Episodes begin in an environment state, $\rves_0$, and thought state, $\rvts_0$, and the agent chooses either an environment action or thought action according to its policy.
Let $A_0$ be the random variable denoting this choice.
If $A_0 \in \sEA$ then the environment state, $\rves_1$, at the next time-step is sampled from $p(\cdot|\rves_0, \rvea_0)$, the thought state $\rvts_1$ keeps the value of $\rvts_0$, and the agent receives reward $R_0 \coloneqq r(\rves_0, \rvea_0)$.
Conversely, if $A_0 \in \sTA$, then $\rves_1$ keeps the value of $\rves_0$, $\rvts_t$ is sampled from $\ttrans(\cdot|\rves_0, \rvts_0, \rvta_0)$, and the agent receives $R_0 = 0$.
Then the process repeats with the agent selecting either an environment or thought action from its policy until the agent reaches a special terminal state $s_\infty \in \sES$.
For thought MDPs, we define the value function to be $v_\pi(\es,\ts) \coloneqq \mathbf{E}_\pi[\sum_{t=0}^\infty \gamma^t R_t | \rves_0 = \es, \rvts_0=\ts]$.

\paragraph{Key Assumptions}
We aim to introduce a general model of thinking in MDPs, however, our subsequent analysis will make two assumptions which we state formally here.
First, we assume that thought state transitions are deterministic in order to simplify our analysis and because it is also the case for LLM agents.
% While we have specified thought state transitions as stochastic for generality, all subsequent theoretical and empirical analysis will use deterministic transitions. We formally state this assumption:
\begin{assumption}[Deterministic Thought Transitions]
$\forall \es \in \sES, \ts \in \sTS, \ta \in \sTA$ $\ttrans({\ts}'|\es, \ts, \ta) = 1$ for one and only one ${\ts}' \in \sTS$. We will write $\ttrans(\es,\ts,\ta) \in \sTS$ to denote the thought state that results from taking $\ta$ in $(\es,\ts)$.
\end{assumption}
Second, we assume that all rewards are non-negative as otherwise thinking could emerge as a strategy solely for the purpose of putting off receiving a negative reward, i.e., if all rewards are negative then the agent will be incentivized to just keep taking thought actions rather than environment actions.
\begin{assumption}[Non-negative Rewards]
    $\forall \es\in\sES, \ea \in \sEA$, $r(\es,\ea) \geq 0$.
\end{assumption}
Finally, we assume reachable positive reward from all states, as otherwise there will exist states where the agent is indifferent to whether it takes a thought action or an environment action.
\begin{assumption}[Reachable Positive Reward]
   $\exists \es \in \sES, \ea \in \sEA$ with $r(\es,\ea)>0$ and $\forall \tilde \es \in \sES$ there exists a policy such that the probability of transitioning from $\tilde \es$ to $\es$ in a finite number of steps is greater than zero.
\end{assumption}

\paragraph{Modeling of Time}
Our formalism raises questions about how time should be treated for the two forms of action. First, should the discount factor be applied equally for both thinking and non-thinking time-steps? Equal application discourages thinking as thought actions do not influence reward either directly or indirectly through the environment state.
Nevertheless, as we shall see, thought actions still might be selected if they ultimately cause the agent to choose a better environment action. 
Alternatively, we could apply a different discount factor for thinking time-steps to reflect the actual time-delay of thinking compared to acting in a given domain.
For example, if thinking takes $(1/k)$ the time of any environment action then we could use a discount of $\gamma^\frac{1}{k}$.
In this work, we will assume that the discount is applied the same at both thinking and non-thinking time steps.
The second related issue is that the proposed model assumes that the environment state remains constant while the agent takes thought actions. Such an assumption might be reasonable for relatively static environments (such as generating text or board games) but is problematic in dynamic environments (such as autonomous driving) where waiting to act can be consequential.
Again, for simplicity, we will focus on the static environment case, but note this issue as another interesting direction to refine the model.
% In this work, we will assume that the discount is applied the same at both thinking and non-thinking time steps.

\paragraph{Optimality}
Thought MDPs are MDPs with state space $\sES \times \sTS$ and action space $\sEA \cup \sTA$ and consequently there will always be at least one deterministic optimal policy \citep{sutton_2018_reinforcement}.
Furthermore, we can show that this optimal policy will never select thought actions. 
\begin{proposition}\label{proposition:optimality}
    Any optimal policy, $\pi^\star$, for a thought MDP does not take thought actions:
    $\pi^\star(\es,\ts) \in \sEA$, $\forall \es \in \sES, \ts \in \sTS$.
\end{proposition}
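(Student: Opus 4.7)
The plan is to argue by contradiction: I suppose some optimal $\pi^\star$ assigns positive probability to a thought action $c \in \sC$ at some pair $(s,\tau)$ and derive $V^\star_{\text{env}}(s) \leq 0$, contradicting Assumption~3. The technical heart of the argument is a reduction lemma stating that the optimal value in the thought MDP coincides with the optimal value in the underlying environment MDP, i.e.\ $v^\star(s,\tau) = V^\star_{\text{env}}(s)$ for every $\tau$, where $V^\star_{\text{env}}$ denotes the optimal value of the MDP $\langle \sS,\sA,p,r,\gamma\rangle$ obtained by ignoring $\sT$, $\sC$, and $p_\sT$.

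For the $\geq$ direction of the reduction, the thought-MDP policy that ignores $\tau$ and plays $\pi^\star_{\text{env}}(s)$ at every state never triggers a thought transition and so achieves value exactly $V^\star_{\text{env}}(s)$. For the $\leq$ direction, I would collapse any sample trajectory $(s_t,\tau_t,a_t)_{t\geq 0}$ generated by a thought-MDP policy $\pi$ onto the subsequence $t_0<t_1<\cdots$ of indices at which an environment action is taken. Because thought actions leave $s_t$ unchanged, the projected sequence $(s_{t_i},a_{t_i})_{i\geq 0}$ is a valid environment trajectory under the history-dependent environment policy obtained by marginalizing $\pi$ over thoughts. Using $t_i \geq i$ together with Assumption~2 gives, trajectorywise, $\sum_i \gamma^{t_i} r(s_{t_i},a_{t_i}) \leq \sum_i \gamma^{i} r(s_{t_i},a_{t_i})$; taking expectations and invoking the standard fact that history-dependent policies cannot outperform the optimal Markov policy in a discounted MDP then yields $v_\pi(s,\tau)\leq V^\star_{\text{env}}(s)$.

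Given the reduction, the proposition drops out from a single application of the Bellman equation. If $c \in \sC$ is an action that $\pi^\star$ plays with positive probability at $(s,\tau)$, optimality forces its Q-value to equal $v^\star(s,\tau)$, so with $\tau' \coloneqq p_\sT(s,\tau,c)$,
\begin{equation*}
V^\star_{\text{env}}(s) \;=\; v^\star(s,\tau) \;=\; 0 + \gamma\, v_{\pi^\star}(s,\tau') \;\leq\; \gamma\, v^\star(s,\tau') \;=\; \gamma\, V^\star_{\text{env}}(s).
\end{equation*}
Assumption~3 implies $V^\star_{\text{env}}(s)>0$ (a finite-length path of positive probability from $s$ to $(s^\ast,a^\ast)$ contributes a strictly positive term to $V^\star_{\text{env}}(s)$), and since $\gamma<1$ this is a contradiction; hence $\pi^\star(s,\tau)\in\sA$.

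The main obstacle I anticipate is the upper bound in the reduction lemma. The trajectory-collapsing intuition — thought actions ``waste'' discount without generating any non-negative reward, so they can only hurt the return — is clear, but making it rigorous requires carefully defining the induced environment policy as history-dependent and appealing to the standard equivalence between history-dependent and Markov policies in discounted MDPs. Everything else is straightforward algebra combined with direct appeals to Assumptions~2 and~3.
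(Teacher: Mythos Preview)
Your argument is correct, but it takes a different route from the paper's. The paper works entirely inside the thought MDP: assuming a (deterministic) optimal $\pi^\star$ takes a thought action at $(s,\tau)$, it follows the deterministic thought chain until $\pi^\star$ first selects an environment action at some $(s,\tilde\tau)$ after $k$ steps, observes $v_{\pi^\star}(s,\tau)=\gamma^{k}v_{\pi^\star}(s,\tilde\tau)$, and then notes that overwriting $\pi^\star(s,\tau)\leftarrow\pi^\star(s,\tilde\tau)$ is a strict improvement (using Assumption~3 to guarantee $v_{\pi^\star}(s,\tilde\tau)>0$). No auxiliary environment MDP, no trajectory collapse, no history-dependent policies.

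Your approach front-loads the work into the reduction lemma $v^\star(s,\tau)=V^\star_{\text{env}}(s)$ and then closes with a single Bellman identity. This buys you two things the paper's proof does not: (i) the lemma itself---that the optimal value is independent of $\tau$---is a clean structural statement of independent interest; and (ii) your argument does not actually need Assumption~1, since the collapse argument is indifferent to whether $p_\sT$ is stochastic, and in the final step you could replace $\gamma\,v^\star(s,\tau')$ by $\gamma\,\mathbf{E}_{\tau'}[v^\star(s,\tau')]=\gamma\,V^\star_{\text{env}}(s)$ without change. The cost is the extra machinery you flag yourself: defining the induced history-dependent environment policy carefully (including the event where the agent thinks forever, in which case the thought-MDP return is $0\leq V^\star_{\text{env}}(s)$) and invoking the Markov/history-dependent equivalence. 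One minor tightening: since $\pi^\star$ is optimal you have $v_{\pi^\star}=v^\star$, so the $\leq$ in your displayed chain is actually an equality.
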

\begin{proof}
The proof is by contradiction. Suppose $\pi^\star$ is an optimal policy and $\exists \es \in \sES, \ts \in \sTS$ such that $\pi(\es, \ts) \in \sTA$. 
Because thought actions cannot produce reward, the optimal policy must eventually reach a thought state, $\widetilde \ts$, where $\pi^\star(\es, \widetilde \ts) \in \sEA$. If that happens after $k$ thought actions, then $v_{\pi^\star}(\es, \ts) = \gamma^k v_{\pi^\star}(\es, \widetilde \ts)$. But then the policy could be strictly improved by setting $\pi^\star(\es, \ts) \leftarrow \pi^\star(\es, \widetilde \ts)$. This contradicts the assumption that $\pi^\star$ was optimal and completes the proof.
\end{proof}

\section{Policy Initialization Determines Emergence of Thinking}\label{sec:theory}

If the optimal policy would never take a thought action, why should we expect thinking to emerge as a strategy during policy improvement?
In this section, we begin to answer this question by showing the key role that policy initialization plays in determining the emergence of thinking.
In particular, we first consider exact policy iteration within a thought MDP and provide an illustrative example and a formal result showing how thinking can emerge.
We use policy iteration for analysis as essentially all model-free RL algorithms can be understood as instances of generalized policy iteration \citep{sutton_2018_reinforcement}.
We then provide a second formal result showing that thought actions can reduce the effective horizon \citep{laidlaw2023bridging} for the special case of goal-MDPs.

\begin{figure}
    \centering
    \includegraphics[width=0.4\linewidth]{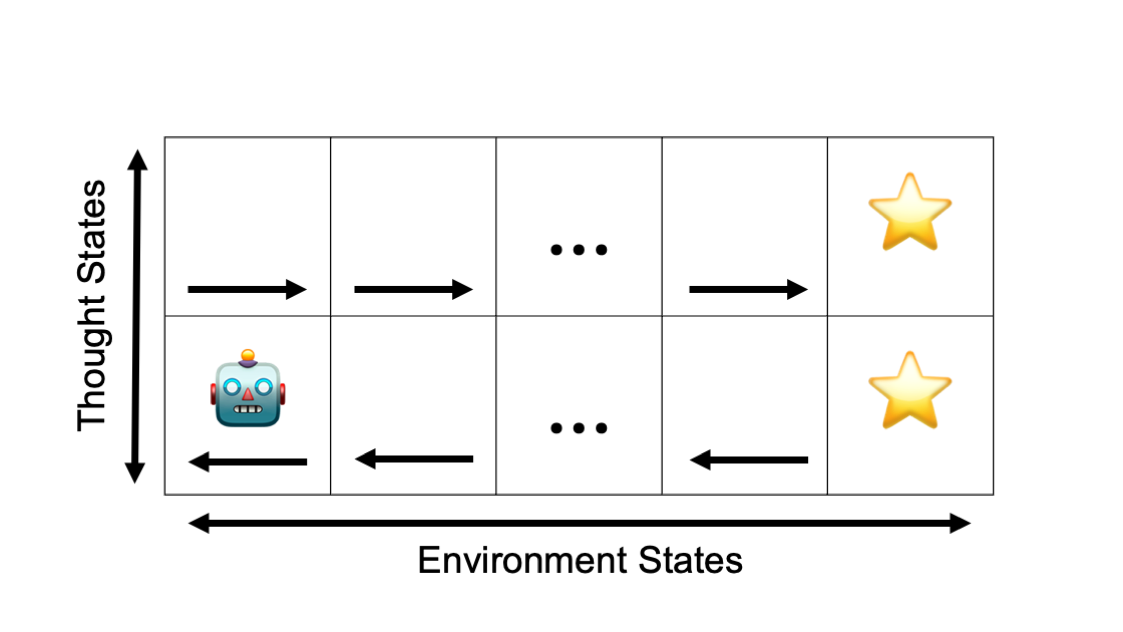}
    \includegraphics[width=0.4\linewidth]{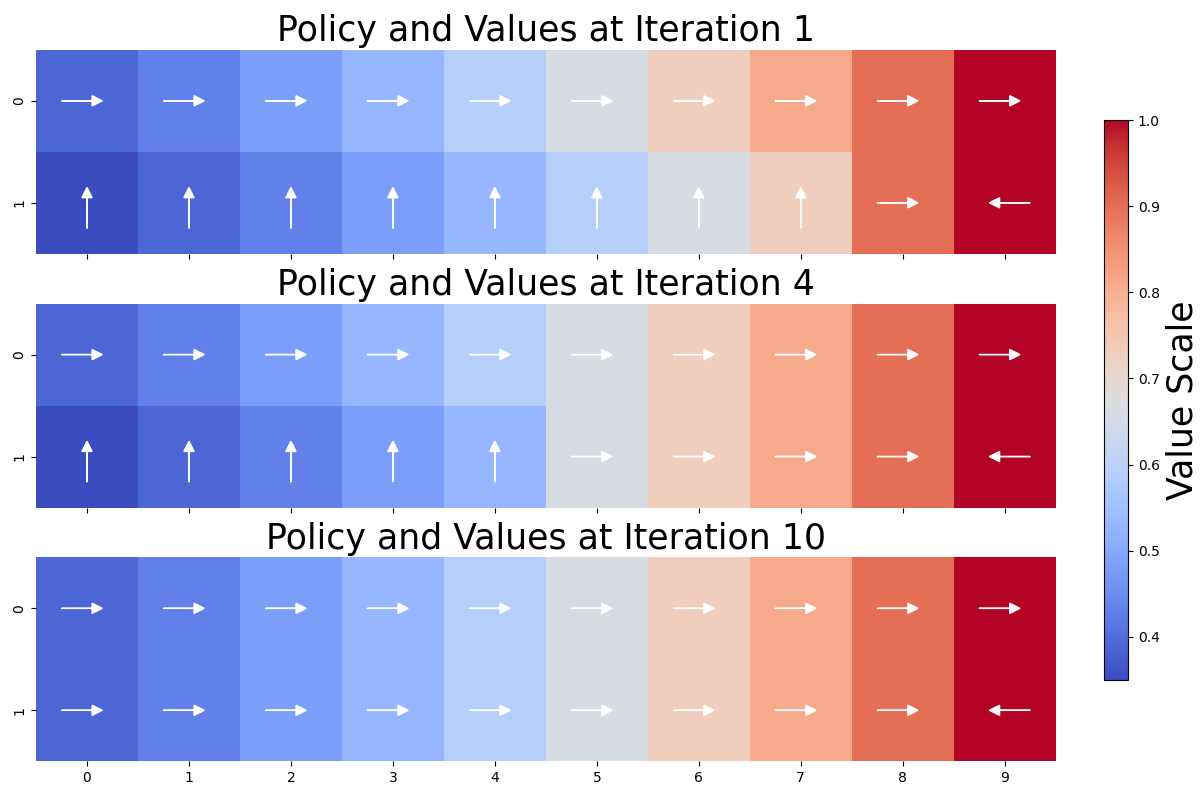}
    \caption{(left) An example thought MDP with $|\sTS|=2$. We use $|\sES|=10$ in our illustrative results. The agent receives a reward when it reaches the goal environment state on the far right. The agent can move left or right in the environment state space and up and down in the thought state space. We use $\gamma=0.9$ for both thinking and non-thinking time-steps.
    (right) Evolution of the policy and state values for 1, 4, and 10 iterations of policy iteration. The policy is initialized as shown on the left. Colors indicate value and arrows indicate the action that the policy would take.}
    \label{fig:policy-iteration}
\end{figure}

\paragraph{An illustrative example}
For exposition's sake, suppose $\sTS = \{\ts_0, \ts_1\}$, in which case we can view the agent's policy, $\pi$, as consisting of two sub-policies, $\pi(\ts_0)$ and $\pi(\ts_1)$.
Now consider the example depicted in \Cref{fig:policy-iteration} where $\pi(\ts_0)$ is initialized sub-optimally (always takes an action that leads away from the goal) while $\pi(\ts_1)$ is initialized to be optimal.
We show the progress of policy iteration in this domain in \Cref{fig:policy-iteration}.
After the first iteration of policy improvement, the policy has learned to first change the agent's thought state to $\ts_1$ (top row) from which it then follows $\pi(\ts_1)$ to reach the goal.
After four iterations, in environment states close to the goal, the policy just directly moves right without changing its thought state while continuing to first take a thought action in states that are far from the goal.
Finally, after ten iterations, the policy converges to the optimal policy and simply moves to the right without thinking.
This example shows that, while thinking is sub-optimal in the long-run, it can be beneficial early in learning by allowing the agent to use sub-policies already contained in its policy function.

To formally understand how policy initialization determines the emergence of thinking, we analyze the policy improvement step of policy iteration.
\begin{theorem}\label{theorem:improvement}
    Let $\pi$ be a policy in a thought MDP such that $\pi(\es,\ts) \in \sEA$ for some environment state $\es$ and thought state $\ts$.
    If the policy improvement step of policy iteration sets $\pi'(\es,\ts) \leftarrow \ta$ for $\ta \in \sTA$, then $v_\pi(\es,{\ts}') > v_\pi(\es, \ts)$, where ${\ts}'$ is the thought state resulting from taking $\ta$ in $(\es,\ts)$.
\end{theorem}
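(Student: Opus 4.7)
The plan is to decompose $Q_\pi(s,\tau,c)$ explicitly using the dynamics of the thought MDP, then exploit the Bellman optimality condition implied by the policy improvement step, and finally use non-negativity of values to upgrade a discounted inequality to a strict inequality. The main workhorse is just the observation that taking a thought action freezes the environment state and yields zero reward.

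\textbf{Step 1: Write out the relevant action values.} Since $c \in \sC$, by the interaction protocol of a thought MDP the agent receives $r_0 = 0$, the environment state stays at $s$, and the thought state deterministically moves to $\tau' = p_\sT(s,\tau,c)$ (using the Deterministic Thought Transitions assumption). Therefore $Q_\pi(s,\tau,c) = 0 + \gamma\, v_\pi(s,\tau')$. In contrast, the original action $a_0 := \pi(s,\tau) \in \sA$ satisfies $Q_\pi(s,\tau,a_0) = v_\pi(s,\tau)$ since $\pi$ is deterministic at $(s,\tau)$.

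\textbf{Step 2: Extract strictness from the improvement step.} The policy improvement rule changes the action at $(s,\tau)$ from $a_0$ to $c$; under the standard convention that policy iteration only switches actions when there is a strict increase in $Q$-value, this gives $Q_\pi(s,\tau,c) > Q_\pi(s,\tau,a_0)$, i.e., $\gamma\, v_\pi(s,\tau') > v_\pi(s,\tau)$. (If the paper uses a weaker convention, I would instead note that the hypothesis ``the improvement step sets $\pi'(s,\tau) \leftarrow c$'' is meaningful only when a strict gain is obtained, so the same inequality holds.)

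\textbf{Step 3: Upgrade to the claimed inequality.} By the Non-negative Rewards assumption, all returns are non-negative, so $v_\pi(s,\tau') \geq 0$ and $\gamma \in (0,1]$ gives $v_\pi(s,\tau') \geq \gamma\, v_\pi(s,\tau')$. Chaining with Step 2 yields $v_\pi(s,\tau') \geq \gamma\, v_\pi(s,\tau') > v_\pi(s,\tau)$, which is the desired conclusion.

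\textbf{Where the work is.} The calculation itself is short; the only subtlety is Step 2, namely being precise about what ``the policy improvement step sets $\pi'(s,\tau)\leftarrow c$'' means when $c$ and $a_0$ happen to tie in $Q$-value. With the usual stay-put-on-ties convention, strictness is immediate. The Reachable Positive Reward assumption is not strictly needed for the inequality itself, but is implicitly what makes the situation non-vacuous — otherwise every $Q$-value could be zero and the improvement step would never switch to a thought action.
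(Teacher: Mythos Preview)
Your proof is correct and follows essentially the same route as the paper's: compute $q_\pi(s,\tau,c)=\gamma v_\pi(s,\tau')$, use the policy improvement step (with the stay-put-on-ties convention) to get $\gamma v_\pi(s,\tau') > v_\pi(s,\tau)$, and then drop the $\gamma$ via non-negativity. If anything, you are slightly more explicit than the paper about where strictness comes from and about the tie-breaking convention; the paper simply writes $v_\pi(s,\tau) < \gamma v_\pi(s,\tau') < v_\pi(s,\tau')$ without commenting on the second inequality.
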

\begin{proof}
    Policy iteration sets $\pi'(\es, \ts) \leftarrow \arg\max_{a \in \sEA \cup \sTA} q_\pi(\es,\ts, a)$ where
    \[
q_\pi(\es, \ts, a) \coloneqq 
\begin{cases}
  r(\es,a) + \gamma \mathbf{E}_{{\es}'}[v_\pi({\es}', \ts)] & \text{if } a \in \sEA \\
  \gamma v_\pi(\es, {\ts}')   & \text{if } a \in \sTA.
\end{cases}
\]
    Since the policy improvement step is selecting a thought action, we have that:
    \[
        \forall \ea \in \sEA, q_\pi(\es, \ts, \ea) < \gamma v_\pi(\es,{\ts}').
    \]
    Since $\pi(\es, \ts)$ was in $\sEA$ before the update we have that:
    \[
        v_\pi(\es, \ts) = q_\pi(\es, \ts, \pi(\es,\ts)) < \gamma v_\pi(\es, {\ts}') < v_\pi(\es, {\ts}') = q_\pi(\es, {\ts}', \pi(\es, {\ts}')).
    \]
    Thus, a thought action is selected when the policy is such that changing the thought state to ${\ts}'$ will lead to a greater expected return from $\es$ than when leaving the thought state at $\ts$.
\end{proof}
Although, \Cref{theorem:improvement} does not directly say anything about policy initialization, the condition for a thought action to be selected requires $\pi(\ts)$ and $\pi({\ts}')$ to be somehow set up as different such that there could be an advantage in shifting from $\ts$ to ${\ts}'$.
Thus, policy initialization is critical for whether thinking will emerge or not.

\Cref{theorem:improvement} deals with exact policy iteration and does not address the impact of policy initialization on the emergence of thinking when learning from samples.
We next turn to the concept of the \textit{effective horizon} introduced by \citet{laidlaw2023bridging}.
The effective horizon was introduced as a problem complexity parameter for MDPs that aligns with the observed difficulty of common RL benchmarks.
\citeauthor{laidlaw2023bridging} then derived sample complexity bounds for model-free RL algorithms in terms of the effective horizon and showed that these bounds were predictive of the success of deep RL algorithms on benchmark problems.
We will build upon their specific result on goal MDPs.
\begin{definition}
    A goal (thought) MDP is an (thought) MDP with a set of absorbing environment states, $\sS_\mathtt{goal} \subset \sES$, and $r(s, a)= 1$  if $s \in \sS_{\mathtt{goal}}$ and $0$ otherwise.
\end{definition}
For goal MDPs, \cite{laidlaw2023bridging} showed that the effective horizon, $H$ can be upper-bounded by $1 + \log_{\sA} \frac{\log 2l}{p}$ where $l$ is a maximum episode length, $p \leq \Pr_\mathtt{expl}(s_T \in \sS_\mathtt{goal}|s_t,a_t)$ is a lower bound on the probability of the initial exploration policy, $\pi_\mathtt{expl}$, discovering a goal state after taking action $a$ in state $s$ at any time-step.
Intuitively, if $p$ is larger, then an RL algorithm has an easier time discovering rewarding action sequences.
We next show that thought actions can be used to reduce the effective horizon of a given instance. 
Note that \citet{laidlaw2023bridging} proved their results assuming deterministic MDPs, so we also adopt this assumption for the next proposition.

\begin{restatable}[]{proposition}{propositionhorizon}
    Suppose we have a goal thought MDP that has $\sTS=\{\ts_0,\ts_1\}$. Let the initial policy, $\pi$, be such that there is some $p_0$ such that $\Pr(\es_l \in s_\mathtt{goal}|\es,\ea,\pi(\ts_0)) > p_0$ for all $(\es,\ea) \in \sES \times \sEA$ and some $p_1$ such that $\Pr(\es_l \in s_\mathtt{goal}|s,a,\pi(\ts_1)) > p_1$ for all $(s,a) \in \sES \times \sEA$.
    Let $\ta_s$ be the thought action such that $\ts_1 = \ttrans(\es,\ts_0,\ta_s)$ and let $p_{\ta}$ lower bound the probability $\pi(\ta_s|\es,\ts_0)$.
    If $p_c \cdot p_1 > p_0$, then thought actions reduce the upper bound on the effective horizon of the MDP.
\label{proposition:effective-horizon}
\end{restatable}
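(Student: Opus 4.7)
The plan is to apply the effective-horizon upper bound of \citet{laidlaw2023bridging} twice---once to the base goal MDP with exploration policy $\pi(\tau_0)$ and once to the augmented goal thought MDP with exploration policy $\pi$---and verify that the hypothesis $p_c \cdot p_1 > p_0$ makes the second bound strictly smaller than the first.

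First, I would record the baseline: in the original goal MDP (with action space $\sA$), the assumption that $\pi(\tau_0)$ reaches a goal state with probability at least $p_0$ from every $(s,a)$ plugs directly into the bound from \citet{laidlaw2023bridging} to give the upper bound $1 + \log_\sA \frac{\log 2T}{p_0}$. Second, I would lower-bound the analogous exploration reach probability in the thought MDP. From any augmented state $(s,\tau_1)$, following $\pi(\tau_1)$ hits a goal state with probability at least $p_1$ by assumption, so $p_c p_1 \leq p_1$ is a valid lower bound there. From $(s,\tau_0)$, the exploration policy $\pi$ selects the thought action $c_s$ with probability at least $p_c$ and deterministically lands in $(s,\tau_1)$ (by the Deterministic Thought Transitions assumption), after which $\pi(\tau_1)$ reaches a goal state with probability at least $p_1$; chaining the two events gives probability at least $p_c \cdot p_1$ of reaching the goal from $(s,\tau_0)$. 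Hence $p_c \cdot p_1$ is a uniform lower bound on the exploration reach probability over all of $\sS \times \sT$.

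Third, I would apply the same formula to the thought MDP, whose action space is now $\sA \cup \sC$, obtaining $1 + \log_{\sA \cup \sC} \frac{\log 2T}{p_c p_1}$. Comparing to the baseline, the numerator inside the log has strictly shrunk (since $p_c p_1 > p_0$) and the log base has weakly grown, so provided the log argument exceeds one (which holds for any non-trivial horizon), the entire bound is strictly smaller.

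The main obstacle I expect is bookkeeping around the extra time-step consumed by the thought action: reaching the goal by time $T$ from $(s,\tau_0)$ by first thinking and then following $\pi(\tau_1)$ requires $\pi(\tau_1)$ to deliver the goal within $T-1$ environment steps rather than $T$. I would resolve this either by applying the $\pi(\tau_1)$ hypothesis at horizon $T-1$ and noting that the resulting $\log(2T-2)$ only shrinks the bound further, or by comparing the thought-MDP bound at horizon $T+1$ against the baseline at horizon $T$, for which the additive slack $\log(2T+2) - \log(2T) = O(1/T)$ is dominated by the strict multiplicative gap $p_c p_1 / p_0 > 1$ sitting inside the log argument.
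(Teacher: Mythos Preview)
Your approach is essentially the same as the paper's: both arguments establish that $p_c \cdot p_1$ uniformly lower-bounds the probability of reaching a goal state from any $(s,\tau)$ in the thought MDP (by chaining the $p_c$ probability of transitioning $\tau_0 \to \tau_1$ with the $p_1$ lower bound for $\pi(\tau_1)$), and then compare this to the baseline lower bound $p_0$ in the Laidlaw et al.\ formula. Your write-up is in fact more careful than the paper's short proof, which does not explicitly address the change in log base from $|\sA|$ to $|\sA \cup \sC|$ or the one-step horizon shift consumed by the thought action; both effects work in the favorable direction, as you correctly note.
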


See \Cref{app:theory} for the proof.
Intuitively, policy $\pi(\ts_1)$ has a higher probability of finding a goal state from any state compared to simply running $\pi(\ts_0)$. 
If the probability of taking a thought action that changes $\ts_0$ to $\ts_1$ is not too low then it is easier to find a goal state by first changing the thought state and then executing $\pi(\ts_1)$.
Again, this result suggests that the benefit of thinking depends crucially on policy initialization.

\subsection*{Thinking as a Policy Improvement Operator}

One interpretation of \Cref{theorem:improvement} is that thought actions can function as policy improvement operators applied to a particular state.
This interpretation is interesting as it aligns with the use of decision-time planning in RL to refine an agent's choice of action in a way that focuses computation on its current state \citep{sutton_2018_reinforcement}.
While thought actions do not involve look-ahead search with a model, \Cref{theorem:improvement} shows that their utility is also in providing local policy improvement.
\Cref{theorem:improvement} shows this utility for the choice of a single thought action and we also present a corollary showing that if policy improvement produces a policy that thinks for consecutive steps in $s$ then each thinking step will further improve upon the action $\pi(s, \ts)$.
\begin{corollary}
    % Assume deterministic thought state transitions and non-negative rewards.
    % 
    Let $\ta$ be a thought action that leads from $\ts$ to ${\ts}'$ and ${\ta}'$ be a thought action that leads from ${\ts}'$ to ${\ts}''$.
    If, in some environment state $\es$, the policy improvement step of policy iteration sets $\pi'(\es,\ts) \leftarrow \ta$ and $\pi'(\es,{\ts}') \leftarrow {\ta}'$, then $v_\pi(\es,{\ts}'') > v_\pi(\es,{\ts}') > v_\pi(\es, \ts)$.
\end{corollary}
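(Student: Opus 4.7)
The plan is to apply \Cref{theorem:improvement} twice, once at the pair $(s,\tau)$ and once at the pair $(s,\tau')$, and then chain the two resulting inequalities. The hypothesis of the corollary is that policy improvement selects a thought action at both of these state-thought pairs, so in each case we are in the setting that \Cref{theorem:improvement} was designed for, provided the pre-update policy takes environment actions at both pairs. I would therefore treat the implicit assumption $\pi(s,\tau), \pi(s,\tau') \in \sA$ (inherited from \Cref{theorem:improvement}'s hypothesis) as part of the setup; without it, the argument in \Cref{theorem:improvement} that upgrades $q_\pi(s,\tau,\pi(s,\tau))$ into $v_\pi(s,\tau)$ via $v_\pi(s,\tau) = q_\pi(s,\tau,\pi(s,\tau))$ has no bite, since a thought action already present in $\pi$ would collapse the inequality to an equality.

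First, I would invoke \Cref{theorem:improvement} at $(s,\tau')$: since $\pi(s,\tau') \in \sA$ and policy improvement sets $\pi'(s,\tau') \leftarrow c'$, where $c'$ transitions $\tau'$ to $\tau''$, the theorem yields $v_\pi(s,\tau'') > v_\pi(s,\tau')$. Then I would invoke \Cref{theorem:improvement} at $(s,\tau)$: since $\pi(s,\tau) \in \sA$ and policy improvement sets $\pi'(s,\tau) \leftarrow c$, where $c$ transitions $\tau$ to $\tau'$, the theorem yields $v_\pi(s,\tau') > v_\pi(s,\tau)$. Chaining gives $v_\pi(s,\tau'') > v_\pi(s,\tau') > v_\pi(s,\tau)$.

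The only real obstacle is verifying that \Cref{theorem:improvement}'s premise applies at \emph{both} pairs simultaneously. I would make this explicit in the proof, noting that the values $v_\pi$ appearing in both applications are evaluated with respect to the same fixed pre-improvement policy $\pi$, so the two inequalities genuinely compose without any circularity introduced by the simultaneous updates at $(s,\tau)$ and $(s,\tau')$. After this observation, the proof is a two-line consequence of \Cref{theorem:improvement} and no further calculation is needed.
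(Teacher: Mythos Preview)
Your proposal is correct and matches the paper's own proof, which also derives both inequalities by applying \Cref{theorem:improvement} (or its proof logic) once at $(s,\tau)$ and once at $(s,\tau')$. Your explicit discussion of the implicit hypothesis $\pi(s,\tau),\pi(s,\tau')\in\sA$ and of why the two applications compose with respect to the same pre-improvement $v_\pi$ is a welcome clarification that the paper leaves tacit.
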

\begin{proof}
    The inequality $v_\pi(\es,{\ts}') > v_\pi(\es, \ts)$ immediately follows from \Cref{theorem:improvement}.
    The inequality $v_\pi(\es,{\ts}'') > v_\pi(\es, {\ts}')$ follows from the same logic as the proof of \Cref{theorem:improvement} except applied to improving the policy in $(\es, {\ts}')$.
\end{proof}

If each step of thinking in $\es$ improves the policy, $\pi(\es,\cdot)$, when should thinking terminate?
From the proof of \Cref{theorem:improvement}, we can see that thinking will terminate when the increase in value from thinking another step no longer compensates for the discounting of value caused by waiting a step to begin taking environmental actions.
In this way, thought MDPs directly tie the ``when to think" decision to the objective of reward maximization.

\section{Language Generation as a Thought MDP}\label{sec:llm}
Our work takes inspiration from recent work on large reasoning models (LRMs) that ``think" by generating additional text that is not part of the final answer but that somehow serves to improve the final answer.
In this section, we review two approaches to imbue LLMs with reasoning capabilities and describe how they can be viewed as instantiating thought MDPs.
We then show that forcing the LLM to reason (in a manner similar to zero-shot chain-of-thought \citep{kojima_large_2023}) increases the expected return from a given state.
Thus, \Cref{theorem:improvement} predicts that model-free RL applied to this thought MDP would lead to thinking, which is in fact what recent work has found.

% In this section, we instantiate thought-augmented language generation as a thought MDP and provide empirical evidence that step-by-step reasoning is serving to improve the agent's language generation policy.

We first describe language generation as an MDP and then describe the thought states and actions of LLMs.
In language generation, each episode begins with a textual prompt, $x$ and the agent's actions are possible tokens from a fixed vocabulary. Let $y_t$ be the agent's output at time $t$. The state at time $t$ is defined as the prompt concatenated with the agent's outputs up to time $t$, $s_t = (x, y_{1:t})$.
Rewards for RL applied to language generation have been defined in different ways. Much of the recent work on reasoning models focuses on math and coding problems, which have verifiable solutions. Thus, the reward is a terminal reward of 1 if a correct solution can be parsed and verified from $y_{1:t}$.

\paragraph{Reasoning with Language as a Thought MDP}
The main difficulty in mapping language generation to a thought MDP is that thought states are intertwined with environment states in the typical MDP formulation for LLMs. Similarly, thought actions are simply from the same space as environment actions.
To distinguish these components, we redefine the environment state as just the query and tokens that function as part of the query response. Next, we define environment actions as just outputs that will affect how the overall response is evaluated and thought actions as the outputs that will not. Thus, the determination of whether one output is considered an environment or a thought action will depend upon the current context up to that decision. 
% the tokens that are part of the query response and thought actions as the tokens in $y_{1:t}$ that are not part of the query response.
% 
Finally, the thought state at time $t$ consists of the already produced tokens in $y_{1:t}$ that will not affect how the response is evaluated.
% Finally, we see two possible ways in which to define the thought state.
% 
% First, in alignment with the definition of policies
% Now, we will say that the entire sequence $(x, y_{1:t})$ is the thought state of the MDP (alternatively, one could view the activations of the transformer for this sequence as the thought state).
% 
% Thought actions are the tokens in $y_{1:t}$ that are not part of the query response.

Different approaches to inducing reasoning in LLMs use different schemes for determining which tokens are part of the final response and which only serve to improve the final response.
For example, \citet{guo2025deepseek} augment the output vocabulary with two special functional tokens, \verb|<think>| and \verb|</think>|.
In addition to the sparse verifier reward, \citeauthor{guo2025deepseek} train DeepSeek-R1 with reward shaping to encourage valid thinking blocks in which \verb|<think>| is followed by \verb|</think>|.
Output text in thinking blocks is not part of the final response that is passed to a verifier to determine reward and would thus constitute the thought state.
Zero-shot prompting is another approach to encourage thinking-like behavior by appending ``Let's think step-by-step" to the query \citep{kojima_large_2023}. In this approach, an answer parser is used to separate the response (environment actions) from the additional prompt and subsequent reasoning tokens (thought actions).
The outputs that are discarded by the parser would constitute the thought actions.

\paragraph{Do thought actions improve expected return in LLMs?}
Recent work from \cite{guo2025deepseek} has shown that thinking-like behavior can emerge from RL.
Based on our theory, we hypothesize that a pre-condition for this result is that thought actions increase $v_\pi(s, \ts)$ by changing the thought state $\ts$.
Because we lack a value function for each LLM, we instead approximate $v_\pi(s, \tau)$ with the Monte Carlo return or, equivalently, the accuracy of the LLM's response.
Our hypothesis is that forcing the LLM to think will increase this value.
Note that we use the Monte Carlo return as opposed to just looking at the immediate probability of the correct response (or its perplexity) to allow for the possibility that the LLM will generate its own reasoning and then answer correctly.
To test our hypothesis, we take different pre-trained LLMs and apply them to add series of five four-digit numbers.
% {\color{blue}
We use models of various sizes from the following model familes:
Qwen-2.5~\citep{qwen2, qwen2.5}, Tulu-2~\citep{ivison2023camels}, LLama-2~\citep{touvron2023llama}, Gemma-3~\citep{gemma_2025}, and Mistral~\citep{DBLP:journals/corr/abs-2310-06825}.
We test two conditions with each model: ``No Thinking" and ``Thinking."
Under the ``No Thinking" condition, the prompt is ``Compute the sum: [a] + [b] + [c] + [d] + [e] = " where [a], [b], [c], [d], and [e] correspond to four-digit integers.
Under the ``Thinking" condition, we append ``[a + b] + [c] + [d] + [e] = [a + b + c] + [d] + [e] = [a + b + c + d] + [e] = " to this prompt,
where [a + b] and [a + b + c] denote the partial sums a+b and a+b+c. In essence, we force the model to first think under the ``Thinking" condition.
We constuct 1000 ``Thinking" and 1000 ``No Thinking" prompts like these using 1000 different sequences of four 4-digit integers, each generated uniformly at random (see \Cref{app:prompts} for example prompts).
\Cref{tab:llms} shows the average accuracy for each model.
% \Cref{tab:llms} shows the average accuracy for models of various sizes from the following model familes:
% Qwen-2.5~\citep{qwen2, qwen2.5}, Tulu-2~\citep{ivison2023camels}, LLama-2~\citep{touvron2023llama}, Gemma-3~\citep{gemma_2025}, and Mistral~\citep{DBLP:journals/corr/abs-2310-06825}.
%% 
For all models, we see that appending the thinking tokens increases accuracy, which corresponds to increasing $v_\pi(\es, \ts)$ by changing $\ts$.
While we do not further apply model-free RL to try and learn to think in this way, our theory and these results predict that these models are primed for thinking to further emerge as a strategy.

\begin{table}

\centering
% \footnotesize
\begin{tabular}{lcc}
\toprule
\textbf{Model} & \textbf{No Thinking (\%)} & \textbf{Thinking (\%)} \\
\midrule
Qwen2.5-1.5B-Instruct        & \( 7.20 \pm 0.82 \)  & \( 71.20 \pm 2.80 \) \\
Qwen2.5-3B-Instruct         & \( 6.80 \pm 0.80 \)  & \( 36.50 \pm 1.52 \) \\
Qwen2.5-7B-Instruct          & \( 5.06 \pm 3.10 \)  & \( 96.10 \pm 2.98 \) \\
Qwen2.5-14B-Instruct        & \( 0.90 \pm 0.33 \)  & \( 95.20 \pm 1.33 \) \\
Tulu-2-7b                  & \( 0.30 \pm 0.33 \)  & \( 28.50 \pm 2.80 \) \\
Tulu-2-13b                  & \( 0.60 \pm 0.47 \)  & \( 49.00 \pm 3.10 \) \\
Llama-2-7b                  & \( 0.00 \pm 0.00 \)  & \( 48.80 \pm 3.10 \) \\
Llama-2-13b                 & \( 0.20 \pm 0.27 \)  & \( 60.70 \pm 3.03 \) \\
Gemma-3-1b-it               & \( 0.50 \pm 0.43 \)  & \( 41.50 \pm 3.06 \) \\
Gemma-3-4b-it               & \( 4.90 \pm 1.33 \)  & \( 91.50 \pm 1.72 \) \\
Mistral-7B-Instruct-v0.3    & \( 1.50 \pm 0.74 \)  & \( 85.20 \pm 2.20 \) \\
\bottomrule
\end{tabular}
\vspace{5pt}
\caption{Response accuracy $\pm$ 95\% confidence interval when using thinking vs no thinking.}
\label{tab:llms}
% \vspace{-1.2em}
\end{table}

% \begin{figure}
%   \centering
%   \includegraphics[width=0.75\textwidth]{figs/llm-accuracy.png}  % Replace with your image file
%   \caption{Accuracy of the response when using thinking vs not.}
%   \label{fig:llama}
% \end{figure}

% \begin{figure}
%     \centering
%     \includegraphics[width=0.5\linewidth]{figs/perplexity.png}
%     \caption{Perplexity of the correct response when using thinking vs not.}
%     \label{fig:llama}
% \end{figure}

% 1. instantiate the thought MDP

% 2. hypothesis. also rationale for this based on CoT

% 3. results

% \todo[inline]{Discuss lack of monotonic improvement}

\vspace{-5pt}
\section{A Non-Language Thought MDP}\label{sec:toy-domain}

Our work takes inspiration from work on LLMs but aspires to understand how RL can lead to thinking in domains beyond language generation, possibly including agents that think in sequences of images \citep{ghazanfari_chain--frames_2025,wiedemer_video_2025} or more abstract spaces \citep{hao_training_2024}.
One of the most important open questions is where do thought MDPs and initial thought-conditioned policies come from outside of LLMs?
In this section, we hypothesize about generalized ingredients for a setting where thinking emerges as a useful strategy for reward maximization.
Specifically, we hypothesize that the key ingredients may be multi-task pre-training coupled with the agent having the ability to manipulate its own internal state to activate pre-trained abilities.
LLMs fit this hypothesis: self-supervised pre-training enables LLM agents to generate responses to many different contexts and language actions enable RL-finetuning to manipulate the input context to activate these existing capabilities when doing so will produce reward.
 To test this hypothesis, we set out to construct a non-language toy domain with these characteristics and to see if thinking could enable the agent to reach a higher return compared to non-thinking agents.
 Code for these experiments is available at \url{https://github.com/prediction-action-lab/thinking-as-control}.

We design a 5x5 gridworld where the agent can move in the cardinal directions or output one of three special actions, `A', `B', or `C' that have no direct effect on the environment.
Each special action corresponds to a possible task in the gridworld: `A' corresponds to navigating to the lower right corner, `B' corresponds to the upper left corner, and `C' corresponds to going to the lower right corner and then to the upper left corner.
% 
% \item Each token corresponds to a task...
Note that these special actions do not explicitly have these effects in the domain but will become associated with these behaviors through pre-training.\footnote{In \Cref{app:toy-domain}, we also include results for the case where tasks `A' and `B' are slightly different from the sub-tasks needed to complete task `C.'} At the start of each episode, the agent receives one of these special actions as part of its observation.

We pre-train an initial policy through behavior cloning on an agent that ``plays" in the gridworld. Specifically, at each time-step, this agent samples a new special action with probability 0.25 and subsequently acts optimally for the corresponding task until it succeeds or samples a new task. During pre-training we only consider special actions `A' and `B' so that the agent has no experience with `C'. The result of this procedure is a policy that wanders back and forth between the upper left and lower right corner while randomly changing its mind about where it is going.
For the policy, we use a causal transformer \citep{vaswani2017attention} that conditions the probability of its action choice on the full episode history up to time $t$. As actions are included in this history, the special actions can manipulate the internal state of the agent even though they do not affect the environment state.

After pre-training, we use vanilla REINFORCE \citep{williams_simple_1992} and train the agent to complete the task corresponding to `C'. The reward received is 1 when the agent is successful and 0 otherwise.
Our hypothesis is that the ability to output `A' or `B' will enable a learning agent using the pre-trained transformer to activate skills learned during pre-training to complete the more challenging task.
% \item Testing is done as follows ...
% \item Architecture and training details.
% 
We test four agents: Pretrained-Think, Pretrained-NoThink, Scratch-Think, and Scratch-NoThink. The NoThink agents have the special actions masked and the Scratch agents are trained from scratch rather than initialized from the pre-trained model.

\begin{figure}[t]
    \centering
    \begin{subfigure}[t]{0.3\textwidth}
        \centering
        \includegraphics[width=\textwidth,clip=true,trim=4cm -1cm 3.75cm 1.2cm]{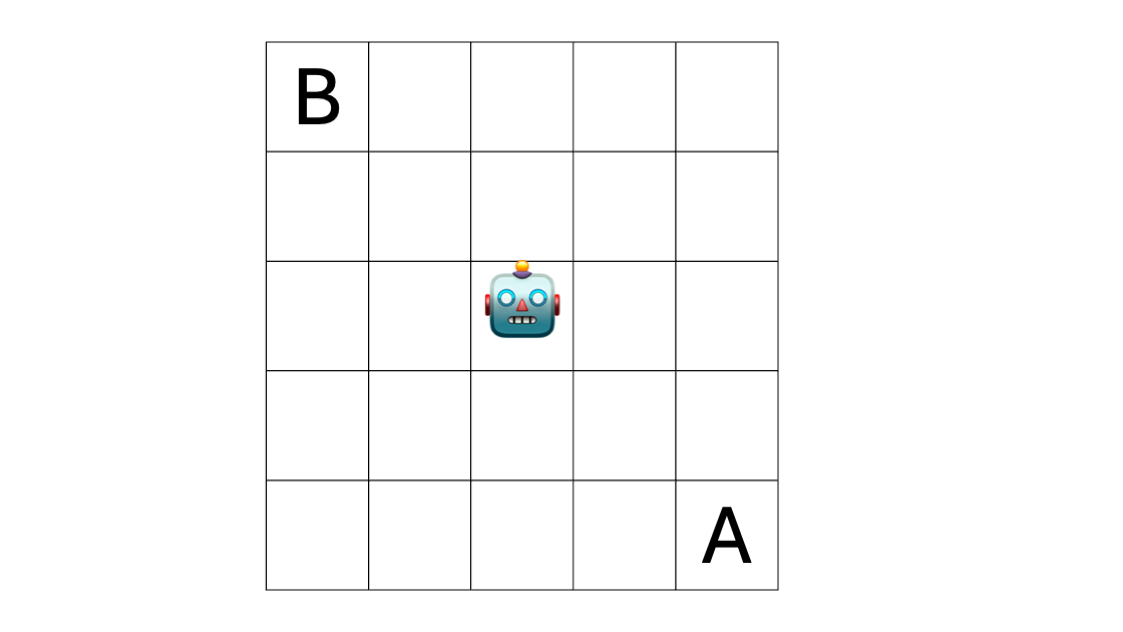}
        \caption{Gridworld Domain}
    \end{subfigure}
    \begin{subfigure}[t]{0.34\textwidth}
        \centering
        \includegraphics[width=\textwidth]{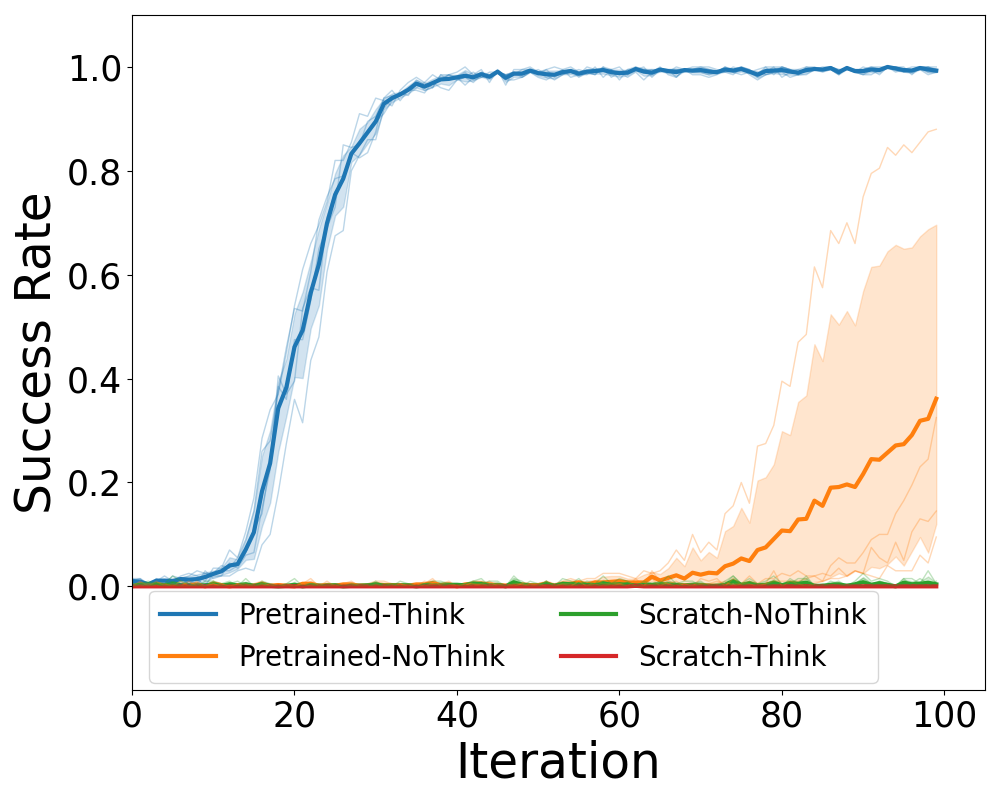}
        \caption{Success Rate}
    \end{subfigure}
    \begin{subfigure}[t]{0.34\textwidth}
        \centering
        \includegraphics[width=\textwidth]{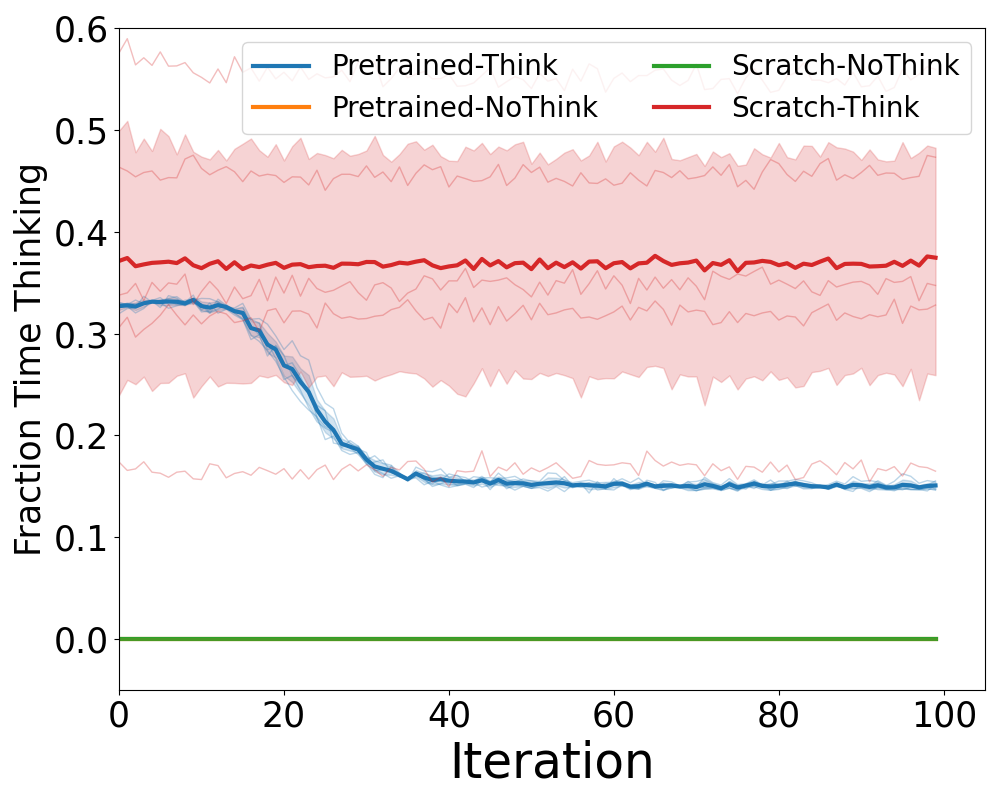}
        \caption{Thinking Time}
    \end{subfigure}
    \caption{Mean learning curves for the four agents we train in the gridworld environment. The vertical axis gives the success rate for first navigating to the bottom right and then the top left corner. The horizontal axis is the iteration of policy improvement (200 episodes are collected at each iteration). We run 5 trials for each learning agent and shading indicates a 95\% bootstrap confidence interval. Light lines show individual training runs.\vspace{-10pt}}
    \label{fig:learn-to-think}
\end{figure}

In \Cref{fig:learn-to-think} we see that Pretrained-Think learns significantly faster than the other agents. % 
The sparse reward makes learning from scratch impossible with or without thinking. 
Pretrained-NoThink also begins to learn the task, though it takes many more iterations to do so. 
This result was initially surprising to us and prompted further investigation on why pre-training helped even without thinking. 
The apparent reason is that sequences of environment actions can also serve to trigger the pre-trained policy's sub-task behavior in a way similar to thought actions.
For example, if the agent moves down 2-3 times in a row then the agent will tend to continue to goal `A' at the bottom right.
Nevertheless, it is more difficult to discover such sequences compared to taking a single thought action.

We also investigate if thinking is actually what is learned by Pretrained-Think. 
\Cref{fig:learn-to-think} shows that the agent initially takes a thought action about 30\% of the time and this fraction falls as it learns and then stabilizes at about 15\%.
Optimal episodes are 14 steps long and the optimal amount of thinking -- barring the agent figuring out the true optimal policy -- is 2 times which is approximately 15\% of all time-steps at convergence.
We observe a trained agent and confirm that it first outputs `A,' navigates to the bottom right, outputs `B,' and then navigates to the top left.
% \end{enumerate}

Finally, we note that thought actions do not go away as the policy converges -- in contrast to \Cref{proposition:optimality} which states that the optimal policy only takes environment actions.
We suspect that this finding is due to sampling error that can cause poor convergence of on-policy policy gradient methods learning from finite samples \citep{corrado2023policy}.
% limitations of policylack of exploration in policy gradient methods like REINFORCE. 
As the agent discovers the strategy to first think and then act, it puts increasingly less probability on immediately taking the optimal action.
Consequently, it never accidently discovers the benefit of taking immediate action to reach the goal a step or two earlier.
% 
% A future remedy could be to use the environment actions that follow thinking as a form of supervision to train the policy to immediately act.

\vspace{-5pt}
\section{Limitations for Future Work}\label{sec:limitations}
\vspace{-5pt}
% 1. when else might thinking emerge

% 2. time constrained domains

% 3. evolutionary approaches

% 4. connecting to model-based RL and VEQ models
In this section, we discuss possible extensions for research in thought MDPs. 
Due to space constraints, we only describe a few directions while briefly noting that further research should consider extensions to partially observable worlds and connections to natural intelligence and the options framework.
% Future work should investigate when else it makes sense for an agent to think and what should the nature of the agent's thought space be. 

\vspace{-5pt}
\paragraph{Where do thought MDPs come from?}
This work studied the question of why thinking actions could be useful to an RL agent even though they leave their environment state unchanged and produce no reward.
We formalized this abstract notion of thinking with the thought MDP model, but left open the big question of how to define the thought states, actions, and thought dynamics in other RL problems where thinking may be useful. 
Language generation is one example, but it is an open question as to how thought MDPs might arise outside the language domain.
There is also the question of where existing thought-conditioned policies come from, as our work showed their existence to be a key enabler of emergent thinking.
In \Cref{sec:toy-domain}, we explored the hypothesis that multi-task pre-training could be a key ingredient but this experiment was only designed as a proof-of-concept.
In the future, it would be interesting to extend this exploration to more complex and realistic domains such as robots using pre-trained vision-language-action models or visual reasoning tasks using video models \citep{wiedemer_video_2025}.

\vspace{-5pt}
\paragraph{Connecting to Models and Planning}
System 2 processing is reminiscent of decision-time planning using a model of the environment state transition function \citep{anthony_thinking_2017}.
This work has presented a more abstract model of thinking where the agent simply learns to control an internal thought state. 
While distinct models, both decision-time planning and thinking in a thought MDP are related in that they amount to locally focused policy improvement \citep{sutton_2018_reinforcement}.
Furthermore, having thought states and actions that are somehow grounded in environment states and actions could present an opportunity for explaining how the agents' thought dynamics should be structured. 
A starting point for this future work could be the Thinker architecture \citep{chung_thinker_2023}, which learns a single policy both for planning in a model (including deciding when to reset the search) and acting in the real world.
% 
% In addition to the ``reset" action used by Thinker, 
% Finally, thought MDPs naturally address the challenges of 1) deciding when to stop planning and begin acting and 2) enabling acting to shift from deliberative to reactive as learning progresses. The latter is similar to how a person novice will think more whereas an expert can simply act.  

\vspace{-5pt}
\paragraph{Thinking in Dynamic and Time-constrained Domains}
This work has only considered the utility of thinking in relatively static domains where the environment state remains the same during thinking. In reality, the environment state may change due to influences other than the agent's actions, and the choice to stop and think must factor in how the environment might change while it does so. A natural extension would be to have thought states and environment states unfold in parallel to one another, with the agent thinking and acting at the same time.

\vspace{-5pt}
\paragraph{Agents with Bounded Capacity}
Could the utility of thinking be enhanced under constraints? We take inspiration from the Big World Hypothesis \citep{javed_big_2024} which states that agents will always have less capacity than what is required to learn all possible tasks. Consequently, when an agent is faced with a new task it may have either never seen it or have forgotten how to do it. Thinking might allow rapid repurposing of an agent’s present capabilities to learn quickly on a new task. The sub-policies that are more frequently used would be repeatedly reinforced whereas less frequently used sub-policies could be forgotten. 

% \paragraph{Connections to Neuroscience}
% We are only casual observers of connections between computational RL and neuroscience. 
% Here we speculate on possible connections between ThoughtMDPs and neuroscience. 
% \begin{itemize}
% \item Explanation for thinking in a network of hedonistic neurons. 
% \item Explanation for why model-based and model-free are intertwined in the brain. 
% \end{itemize}
% The hippocampus is involved in forward simulation, which better matches decision-time planning. However, it is possible that these simulations are triggered by a reward-driven controller. 

% \paragraph{POMDPs}
% Another natural extension is to consider domains where the agent cannot directly sense its environment state. In such domains, it would be interesting to see if the increased computation afforded by thinking could be seen as inference of the underlying hidden state. A more speculative direction is to consider ThoughtMDPs where the agent cannot observe a Markov thought state but only make partial observations of it. 

% \paragraph{Multi-Agent Settings}
% Finally, there are a rich set of problems raised by the existence of other agents in a thought MDP environment. First, now other agents can learn to influence each others thoughr state through their environment actions. Second, any multi-agent domain will be dynamic unless agents have distinct thinking periods (as in Go). 

\section{Conclusion}\vspace{-5pt}
In this work, we investigated the question of when model-free RL will lead to ``thinking" behavior.
We introduced the thought MDP model and then used this model to show that thinking emerges as a strategy to manipulate an agent's internal state so as to improve an agent's ultimate choice of environment action. Consequently, the emergence of thinking depends upon a policy initialization that implicitly contains sub-policies that can be triggered by taking thinking actions.
We then provided supporting evidence that step-by-step reasoning in LLMs functions similarly to the thought actions in this theoretical model.
Finally, we developed a non-language domain in which thinking would emerge as a strategy for reward maximization and discussed the many exciting next steps for developing AI agents that learn to think.

\section*{Acknowledgments}
We thank Xiaojin Zhu and Adam Labiosa for much discussion and comments that improved the final version of this work.
This work took place in the Prediction and Action Lab (PAL) at the University of Wisconsin -- Madison.
PAL research is supported by NSF (IIS-2410981), the Wisconsin Alumni Research Foundation, and Sandia National Labs through a University Partnership Award.

\bibliographystyle{abbrvnat}
%\bibliography{arxiv/references}
\bibliography{references}

\newpage
\section*{NeurIPS Paper Checklist}

\begin{enumerate}

\item {\bf Claims}
    \item[] Question: Do the main claims made in the abstract and introduction accurately reflect the paper's contributions and scope?
    \item[] Answer: \answerYes{} % Replace by \answerYes{}, \answerNo{}, or \answerNA{}.
    \item[] Justification: This paper claims (1) to introduce a formal model of thinking in RL environments (see \Cref{sec:thought-mdp}), (2) to formally show conditions when so-called thought actions will be chosen by an RL algorithm and how they could be helpful depending on policy initialization (see \Cref{sec:theory}), (3) to validate that some pre-trained LLMs satisfy the conditions when thought actions would be helpful (see \Cref{sec:llm}) and (4) to introduce a toy domain where multi-task pre-training leads to model-free RL discovering thinking as a useful strategy (see \Cref{sec:toy-domain}). 
    \item[] Guidelines:
    \begin{itemize}
        \item The answer NA means that the abstract and introduction do not include the claims made in the paper.
        \item The abstract and/or introduction should clearly state the claims made, including the contributions made in the paper and important assumptions and limitations. A No or NA answer to this question will not be perceived well by the reviewers. 
        \item The claims made should match theoretical and experimental results, and reflect how much the results can be expected to generalize to other settings. 
        \item It is fine to include aspirational goals as motivation as long as it is clear that these goals are not attained by the paper. 
    \end{itemize}

\item {\bf Limitations}
    \item[] Question: Does the paper discuss the limitations of the work performed by the authors?
    \item[] Answer: \answerYes{} % Replace by \answerYes{}, \answerNo{}, or \answerNA{}.
    \item[] Justification: \Cref{sec:limitations} discusses limitations as a starting point for future work. We also discuss a number of limitations throughout the paper, particularly when we introduce the thought MDP model in \Cref{sec:thought-mdp}.
    \item[] Guidelines:
    \begin{itemize}
        \item The answer NA means that the paper has no limitation while the answer No means that the paper has limitations, but those are not discussed in the paper. 
        \item The authors are encouraged to create a separate "Limitations" section in their paper.
        \item The paper should point out any strong assumptions and how robust the results are to violations of these assumptions (e.g., independence assumptions, noiseless settings, model well-specification, asymptotic approximations only holding locally). The authors should reflect on how these assumptions might be violated in practice and what the implications would be.
        \item The authors should reflect on the scope of the claims made, e.g., if the approach was only tested on a few datasets or with a few runs. In general, empirical results often depend on implicit assumptions, which should be articulated.
        \item The authors should reflect on the factors that influence the performance of the approach. For example, a facial recognition algorithm may perform poorly when image resolution is low or images are taken in low lighting. Or a speech-to-text system might not be used reliably to provide closed captions for online lectures because it fails to handle technical jargon.
        \item The authors should discuss the computational efficiency of the proposed algorithms and how they scale with dataset size.
        \item If applicable, the authors should discuss possible limitations of their approach to address problems of privacy and fairness.
        \item While the authors might fear that complete honesty about limitations might be used by reviewers as grounds for rejection, a worse outcome might be that reviewers discover limitations that aren't acknowledged in the paper. The authors should use their best judgment and recognize that individual actions in favor of transparency play an important role in developing norms that preserve the integrity of the community. Reviewers will be specifically instructed to not penalize honesty concerning limitations.
    \end{itemize}

\item {\bf Theory assumptions and proofs}
    \item[] Question: For each theoretical result, does the paper provide the full set of assumptions and a complete (and correct) proof?
    \item[] Answer: \answerYes{} % Replace by \answerYes{}, \answerNo{}, or \answerNA{}.
    \item[] Justification: All theoretical results are found in \Cref{sec:theory} along with their proof. Key assumptions are stated along with the result or made in \Cref{sec:thought-mdp}.
    \item[] Guidelines:
    \begin{itemize}
        \item The answer NA means that the paper does not include theoretical results. 
        \item All the theorems, formulas, and proofs in the paper should be numbered and cross-referenced.
        \item All assumptions should be clearly stated or referenced in the statement of any theorems.
        \item The proofs can either appear in the main paper or the supplemental material, but if they appear in the supplemental material, the authors are encouraged to provide a short proof sketch to provide intuition. 
        \item Inversely, any informal proof provided in the core of the paper should be complemented by formal proofs provided in appendix or supplemental material.
        \item Theorems and Lemmas that the proof relies upon should be properly referenced. 
    \end{itemize}

    \item {\bf Experimental result reproducibility}
    \item[] Question: Does the paper fully disclose all the information needed to reproduce the main experimental results of the paper to the extent that it affects the main claims and/or conclusions of the paper (regardless of whether the code and data are provided or not)?
    \item[] Answer: \answerYes{} % Replace by \answerYes{}, \answerNo{}, or \answerNA{}.
    \item[] Justification: \Cref{sec:llm} completely specifies the LLM experiments and example prompts are provided in \Cref{app:prompts}. \Cref{sec:toy-domain} describes the toy-domain experiments and additional empirical details are found in \Cref{app:toy-domain}.
    \item[] Guidelines:
    \begin{itemize}
        \item The answer NA means that the paper does not include experiments.
        \item If the paper includes experiments, a No answer to this question will not be perceived well by the reviewers: Making the paper reproducible is important, regardless of whether the code and data are provided or not.
        \item If the contribution is a dataset and/or model, the authors should describe the steps taken to make their results reproducible or verifiable. 
        \item Depending on the contribution, reproducibility can be accomplished in various ways. For example, if the contribution is a novel architecture, describing the architecture fully might suffice, or if the contribution is a specific model and empirical evaluation, it may be necessary to either make it possible for others to replicate the model with the same dataset, or provide access to the model. In general. releasing code and data is often one good way to accomplish this, but reproducibility can also be provided via detailed instructions for how to replicate the results, access to a hosted model (e.g., in the case of a large language model), releasing of a model checkpoint, or other means that are appropriate to the research performed.
        \item While NeurIPS does not require releasing code, the conference does require all submissions to provide some reasonable avenue for reproducibility, which may depend on the nature of the contribution. For example
        \begin{enumerate}
            \item If the contribution is primarily a new algorithm, the paper should make it clear how to reproduce that algorithm.
            \item If the contribution is primarily a new model architecture, the paper should describe the architecture clearly and fully.
            \item If the contribution is a new model (e.g., a large language model), then there should either be a way to access this model for reproducing the results or a way to reproduce the model (e.g., with an open-source dataset or instructions for how to construct the dataset).
            \item We recognize that reproducibility may be tricky in some cases, in which case authors are welcome to describe the particular way they provide for reproducibility. In the case of closed-source models, it may be that access to the model is limited in some way (e.g., to registered users), but it should be possible for other researchers to have some path to reproducing or verifying the results.
        \end{enumerate}
    \end{itemize}

\item {\bf Open access to data and code}
    \item[] Question: Does the paper provide open access to the data and code, with sufficient instructions to faithfully reproduce the main experimental results, as described in supplemental material?
    \item[] Answer: \answerYes{} % Replace by \answerYes{}, \answerNo{}, or \answerNA{}.
    \item[] Justification: We have included the code for the submission in the supplemental material. No external data sources are used.
    \item[] Guidelines:
    \begin{itemize}
        \item The answer NA means that paper does not include experiments requiring code.
        \item Please see the NeurIPS code and data submission guidelines (\url{https://nips.cc/public/guides/CodeSubmissionPolicy}) for more details.
        \item While we encourage the release of code and data, we understand that this might not be possible, so “No” is an acceptable answer. Papers cannot be rejected simply for not including code, unless this is central to the contribution (e.g., for a new open-source benchmark).
        \item The instructions should contain the exact command and environment needed to run to reproduce the results. See the NeurIPS code and data submission guidelines (\url{https://nips.cc/public/guides/CodeSubmissionPolicy}) for more details.
        \item The authors should provide instructions on data access and preparation, including how to access the raw data, preprocessed data, intermediate data, and generated data, etc.
        \item The authors should provide scripts to reproduce all experimental results for the new proposed method and baselines. If only a subset of experiments are reproducible, they should state which ones are omitted from the script and why.
        \item At submission time, to preserve anonymity, the authors should release anonymized versions (if applicable).
        \item Providing as much information as possible in supplemental material (appended to the paper) is recommended, but including URLs to data and code is permitted.
    \end{itemize}

\item {\bf Experimental setting/details}
    \item[] Question: Does the paper specify all the training and test details (e.g., data splits, hyperparameters, how they were chosen, type of optimizer, etc.) necessary to understand the results?
    \item[] Answer: \answerYes{} % Replace by \answerYes{}, \answerNo{}, or \answerNA{}.
    \item[] Justification: These details can be found in \Cref{sec:llm}, \Cref{sec:toy-domain}, or \Cref{app:toy-domain}.
    \item[] Guidelines:
    \begin{itemize}
        \item The answer NA means that the paper does not include experiments.
        \item The experimental setting should be presented in the core of the paper to a level of detail that is necessary to appreciate the results and make sense of them.
        \item The full details can be provided either with the code, in appendix, or as supplemental material.
    \end{itemize}

\item {\bf Experiment statistical significance}
    \item[] Question: Does the paper report error bars suitably and correctly defined or other appropriate information about the statistical significance of the experiments?
    \item[] Answer: \answerYes{} % Replace by \answerYes{}, \answerNo{}, or \answerNA{}.
    \item[] Justification: 
    % \justificationTODO{}
    Confidence intervals are reported for all empirical results presented.
    \item[] Guidelines:
    \begin{itemize}
        \item The answer NA means that the paper does not include experiments.
        \item The authors should answer "Yes" if the results are accompanied by error bars, confidence intervals, or statistical significance tests, at least for the experiments that support the main claims of the paper.
        \item The factors of variability that the error bars are capturing should be clearly stated (for example, train/test split, initialization, random drawing of some parameter, or overall run with given experimental conditions).
        \item The method for calculating the error bars should be explained (closed form formula, call to a library function, bootstrap, etc.)
        \item The assumptions made should be given (e.g., Normally distributed errors).
        \item It should be clear whether the error bar is the standard deviation or the standard error of the mean.
        \item It is OK to report 1-sigma error bars, but one should state it. The authors should preferably report a 2-sigma error bar than state that they have a 96\% CI, if the hypothesis of Normality of errors is not verified.
        \item For asymmetric distributions, the authors should be careful not to show in tables or figures symmetric error bars that would yield results that are out of range (e.g. negative error rates).
        \item If error bars are reported in tables or plots, The authors should explain in the text how they were calculated and reference the corresponding figures or tables in the text.
    \end{itemize}

\item {\bf Experiments compute resources}
    \item[] Question: For each experiment, does the paper provide sufficient information on the computer resources (type of compute workers, memory, time of execution) needed to reproduce the experiments?
    \item[] Answer: \answerYes{} % Replace by \answerYes{}, \answerNo{}, or \answerNA{}.
    \item[] Justification: See \Cref{sec:toy-domain}.
    \item[] Guidelines:
    \begin{itemize}
        \item The answer NA means that the paper does not include experiments.
        \item The paper should indicate the type of compute workers CPU or GPU, internal cluster, or cloud provider, including relevant memory and storage.
        \item The paper should provide the amount of compute required for each of the individual experimental runs as well as estimate the total compute. 
        \item The paper should disclose whether the full research project required more compute than the experiments reported in the paper (e.g., preliminary or failed experiments that didn't make it into the paper). 
    \end{itemize}
    
\item {\bf Code of ethics}
    \item[] Question: Does the research conducted in the paper conform, in every respect, with the NeurIPS Code of Ethics \url{https://neurips.cc/public/EthicsGuidelines}?
    \item[] Answer: \answerYes{} % Replace by \answerYes{}, \answerNo{}, or \answerNA{}.
    \item[] Justification: We have read the Code of Ethics and confirm the paper conforms to them.
    \item[] Guidelines:
    \begin{itemize}
        \item The answer NA means that the authors have not reviewed the NeurIPS Code of Ethics.
        \item If the authors answer No, they should explain the special circumstances that require a deviation from the Code of Ethics.
        \item The authors should make sure to preserve anonymity (e.g., if there is a special consideration due to laws or regulations in their jurisdiction).
    \end{itemize}

\item {\bf Broader impacts}
    \item[] Question: Does the paper discuss both potential positive societal impacts and negative societal impacts of the work performed?
    \item[] Answer: \answerNA{} % Replace by \answerYes{}, \answerNo{}, or \answerNA{}.
    \item[] Justification: This paper is theoretical in nature and consequently has no immediate societal impact. Due to the dual-use nature of AI, by seeking to advance the field, our work will potentially contribute both to positive and negative societal impacts.
    \item[] Guidelines:
    \begin{itemize}
        \item The answer NA means that there is no societal impact of the work performed.
        \item If the authors answer NA or No, they should explain why their work has no societal impact or why the paper does not address societal impact.
        \item Examples of negative societal impacts include potential malicious or unintended uses (e.g., disinformation, generating fake profiles, surveillance), fairness considerations (e.g., deployment of technologies that could make decisions that unfairly impact specific groups), privacy considerations, and security considerations.
        \item The conference expects that many papers will be foundational research and not tied to particular applications, let alone deployments. However, if there is a direct path to any negative applications, the authors should point it out. For example, it is legitimate to point out that an improvement in the quality of generative models could be used to generate deepfakes for disinformation. On the other hand, it is not needed to point out that a generic algorithm for optimizing neural networks could enable people to train models that generate Deepfakes faster.
        \item The authors should consider possible harms that could arise when the technology is being used as intended and functioning correctly, harms that could arise when the technology is being used as intended but gives incorrect results, and harms following from (intentional or unintentional) misuse of the technology.
        \item If there are negative societal impacts, the authors could also discuss possible mitigation strategies (e.g., gated release of models, providing defenses in addition to attacks, mechanisms for monitoring misuse, mechanisms to monitor how a system learns from feedback over time, improving the efficiency and accessibility of ML).
    \end{itemize}
    
\item {\bf Safeguards}
    \item[] Question: Does the paper describe safeguards that have been put in place for responsible release of data or models that have a high risk for misuse (e.g., pretrained language models, image generators, or scraped datasets)?
    \item[] Answer: \answerNA{} % Replace by \answerYes{}, \answerNo{}, or \answerNA{}.
    \item[] Justification: 
    % \justificationTODO{}
    This is a theory paper with experiments on toy domains and does not release any new models that require safeguards.
    \item[] Guidelines:
    \begin{itemize}
        \item The answer NA means that the paper poses no such risks.
        \item Released models that have a high risk for misuse or dual-use should be released with necessary safeguards to allow for controlled use of the model, for example by requiring that users adhere to usage guidelines or restrictions to access the model or implementing safety filters. 
        \item Datasets that have been scraped from the Internet could pose safety risks. The authors should describe how they avoided releasing unsafe images.
        \item We recognize that providing effective safeguards is challenging, and many papers do not require this, but we encourage authors to take this into account and make a best faith effort.
    \end{itemize}

\item {\bf Licenses for existing assets}
    \item[] Question: Are the creators or original owners of assets (e.g., code, data, models), used in the paper, properly credited and are the license and terms of use explicitly mentioned and properly respected?
    \item[] Answer: \answerYes{} % Replace by \answerYes{}, \answerNo{}, or \answerNA{}.
    \item[] Justification: We cite the LLMs we use and also cite prominent software packages used (specifically Pytorch and Numpy).
    \item[] Guidelines:
    \begin{itemize}
        \item The answer NA means that the paper does not use existing assets.
        \item The authors should cite the original paper that produced the code package or dataset.
        \item The authors should state which version of the asset is used and, if possible, include a URL.
        \item The name of the license (e.g., CC-BY 4.0) should be included for each asset.
        \item For scraped data from a particular source (e.g., website), the copyright and terms of service of that source should be provided.
        \item If assets are released, the license, copyright information, and terms of use in the package should be provided. For popular datasets, \url{paperswithcode.com/datasets} has curated licenses for some datasets. Their licensing guide can help determine the license of a dataset.
        \item For existing datasets that are re-packaged, both the original license and the license of the derived asset (if it has changed) should be provided.
        \item If this information is not available online, the authors are encouraged to reach out to the asset's creators.
    \end{itemize}

\item {\bf New assets}
    \item[] Question: Are new assets introduced in the paper well documented and is the documentation provided alongside the assets?
    \item[] Answer: \answerNo{} % Replace by \answerYes{}, \answerNo{}, or \answerNA{}.
    \item[] Justification: No new assets.
    \item[] Guidelines:
    \begin{itemize}
        \item The answer NA means that the paper does not release new assets.
        \item Researchers should communicate the details of the dataset/code/model as part of their submissions via structured templates. This includes details about training, license, limitations, etc. 
        \item The paper should discuss whether and how consent was obtained from people whose asset is used.
        \item At submission time, remember to anonymize your assets (if applicable). You can either create an anonymized URL or include an anonymized zip file.
    \end{itemize}

\item {\bf Crowdsourcing and research with human subjects}
    \item[] Question: For crowdsourcing experiments and research with human subjects, does the paper include the full text of instructions given to participants and screenshots, if applicable, as well as details about compensation (if any)? 
    \item[] Answer: \answerNA{} % Replace by \answerYes{}, \answerNo{}, or \answerNA{}.
    \item[] Justification: No human subjects or crowdsourcing used.
    \item[] Guidelines:
    \begin{itemize}
        \item The answer NA means that the paper does not involve crowdsourcing nor research with human subjects.
        \item Including this information in the supplemental material is fine, but if the main contribution of the paper involves human subjects, then as much detail as possible should be included in the main paper. 
        \item According to the NeurIPS Code of Ethics, workers involved in data collection, curation, or other labor should be paid at least the minimum wage in the country of the data collector. 
    \end{itemize}

\item {\bf Institutional review board (IRB) approvals or equivalent for research with human subjects}
    \item[] Question: Does the paper describe potential risks incurred by study participants, whether such risks were disclosed to the subjects, and whether Institutional Review Board (IRB) approvals (or an equivalent approval/review based on the requirements of your country or institution) were obtained?
    \item[] Answer: \answerNA{} % Replace by \answerYes{}, \answerNo{}, or \answerNA{}.
    \item[] Justification: No human subject use.
    \item[] Guidelines:
    \begin{itemize}
        \item The answer NA means that the paper does not involve crowdsourcing nor research with human subjects.
        \item Depending on the country in which research is conducted, IRB approval (or equivalent) may be required for any human subjects research. If you obtained IRB approval, you should clearly state this in the paper. 
        \item We recognize that the procedures for this may vary significantly between institutions and locations, and we expect authors to adhere to the NeurIPS Code of Ethics and the guidelines for their institution. 
        \item For initial submissions, do not include any information that would break anonymity (if applicable), such as the institution conducting the review.
    \end{itemize}

\item {\bf Declaration of LLM usage}
    \item[] Question: Does the paper describe the usage of LLMs if it is an important, original, or non-standard component of the core methods in this research? Note that if the LLM is used only for writing, editing, or formatting purposes and does not impact the core methodology, scientific rigorousness, or originality of the research, declaration is not required.
    %this research? 
    \item[] Answer: \answerYes{} % Replace by \answerYes{}, \answerNo{}, or \answerNA{}.
    \item[] Justification: See \Cref{sec:llm} and \Cref{app:prompts}.
    \item[] Guidelines:
    \begin{itemize}
        \item The answer NA means that the core method development in this research does not involve LLMs as any important, original, or non-standard components.
        \item Please refer to our LLM policy (\url{https://neurips.cc/Conferences/2025/LLM}) for what should or should not be described.
    \end{itemize}

\end{enumerate}

%%%%%%%%%%%%%%%%%%%%%%%%%%%%%%%%%%%%%%%%%%%%%%%%%%%%%%%%%%%%

\newpage
\appendix

\section{Proofs of Theoretical Results}\label{app:theory}

\propositionhorizon*
\begin{proof}
    The proof follows from the fact that, under the assumed policy initialization, thought actions raise the lower bound on finding a goal.
    First, note that the effective horizon has a tighter upper bound if we always fix the thought state to be $\ts_1$ rather than $\ts_0$.
    Thus, there is just the question of getting the thought state to be $\ts_1$.
    We lower bound this probability with $p_c$ so that, if the agent is in $(s, \ts_0)$, it has at least a probability of getting to $\ts_1$ from which it has at least a probability $p_1$ of finding a goal.
    Thus, the lower bound on $\Pr(s_T \in \sS_\mathtt{goal}|s,\ts,a, \pi)$ will be at least $p_c \cdot p_1$ for both $\ts_0$ and $\ts_1$ which reduces the effective horizon if $p_c \cdot p_1 > p_0$.
\end{proof}

\section{Extended Theoretical Analysis for Finite-Horizon Thought MDPs}\label{app:finite-horizon}

In the main paper text we formalized thought MDPs under the discounted return objective that is common in RL research.
In this appendix, we extend these results to the finite-horizon, non-discounted setting that is common in contemporary uses of RL to train reasoning behavior in LLMs.

\subsection{RL in Time-Homogenous Finite-Horizon Markov Decision Processes}

In this section and the following, we use $[k]$ to denote the set of integers $\{1,...,k\}$.
Formally, a time-homogenous, fixed-horizon MDP is a tuple, $\langle \sS, \sA, l, p, r \rangle$, where $\sS$ is the environment state space, $\sA$ is the set of actions that the agent can take to influence the environment state, $l$ is the maximum length of an episode, $p:\sS \times \sA \rightarrow \Delta(\sS)$ is a stochastic state transition function, and $r: \sS \times \sA\rightarrow \mathbb{R}$ is a scalar-valued reward function.
When the agent is in state $s$ at timestep $t$ and takes action $a$ then it receives $r(s, a)$ and transition to a new state $S' \sim p(\cdot | s, a)$ and $t$ is incremented by 1.
The process repeats from $S'$ until either a terminal state, $s_\infty$, is reached or $t$ reaches $l$.
% At any moment in time, the agent is in state $s_t$, takes an action, $a_t$, receives a reward, and transitions to a new state. Then the process repeats from $s_{t+1}$ until a terminal state, $s_\infty$, is reached.
% 
The agent selects actions according to a policy, $\pi: \sS  \times [l] \rightarrow \Delta(\sA)$. 
The time-dependent value of using a policy from a particular state, $s$, is defined to be $v_\pi(s, t) \coloneqq \mathbf{E}_\pi[\sum_{i=t}^{l-1} r(s_i, a_i) | s_t = s, a_i \sim \pi(\cdot| s_i, i)]$.
In finite-horizon RL, the agent's objective is to find a policy that maximizes $v_\pi(s,t)$ in all states and for all time-steps though, in practice, we could just focuse on maximizing $v_\pi(s, 0)$ for all possible initial states.

\subsection{Time-Homogenous Finite-Horizon Thought MDPs}\label{sec:thought-mdp:finite-horizon}

We formally define a \textit{finite-horizon thought MDP} as the tuple $\langle \sES, \sEA, l, p, r, \sTS, \sTA, p_T \rangle$, where $\sES, \sEA, p, r, l$ are defined as they are for standard finite-horizon MDPs.
We add $\sTS$ as the set of \textit{thought states}, $\sTA$ as the set of \textit{thought actions}, and $p_T: \sES \times \sTS \times \sTA \rightarrow \Delta(\sTS)$ as the \textit{thought transition function}.
Recalling our informal definition of thinking from the main text, we emphasize that thought states and actions do not affect environment state transitions and rewards.
However, they do affect the transition of time as we assume that the episode time-step will still be incremented when the agent takes a thought action.

We define policies for thought MDPs as in the main text except now with a time dependency: $\pi: \sES \times \sTS \times [l] \rightarrow \Delta(\sEA \cup \sTA)$.
In this section, we will use the notation $\pi(\ts)$ to refer to the agent's state- and time-dependent policy with the thought state fixed at $\ts$.

In a thought MDP, interaction proceeds as follows.
Episodes begin in timestep $t=0$, initial environment state, $s_0$, and thought state, $\ts_0$, and the agent chooses either an environment action or thought action according to its policy.
In either case, we increment $t$ to $t=1$.
If $a_0 \in \sEA$ then the environment state, $s_1$, at the next time-step is sampled from $p(s_0, a_0)$, the thought state $\ts_1$ keeps the value of $\ts_0$, and the agent receives reward $r_0 \coloneqq r(s_0, a_0)$.
Conversely, if $a_0 \in \sTA$, then $s_1$ keeps the value of $s_0$, $\ts_1$ is sampled from $p_T(s_0, \ts_0, a_0)$, and the agent receives $r_0 = 0$.
Then the action-selection process repeats until either the agent reaches $s_\infty$ or $t=l$.
For thought MDPs, we define the value function to be $v_\pi(s,\tau,t) \coloneqq \mathbf{E}_\pi[\sum_{i=t}^{l-1} r(s_i,x_i) | s_t=s, \tau_t=\tau],$ with $r(s_i, x_i) = 0$ if $x_i \in \sTA.$

We next adapt the key assumptions of the main paper to the finite-horizon setting.
First, we assume that thought state transitions are deterministic in order to simplify our analysis and because it is also the case for LLM agents.
\begin{assumption}[Deterministic Thought Transitions]
$\forall \es \in \sES, \ts \in \sTS, \ta \in \sTA$,  $\ttrans({\ts}'|\es, \ts, \ta) = 1$ for one and only one ${\ts}' \in \sTS$. We will write $\ttrans(\es,\ts,\ta) \in \sTS$ to denote the thought state that results from taking $\ta$ in $(\es,\ts)$.
\end{assumption}
Second, we assume that all rewards are non-negative as otherwise thinking could emerge as a strategy solely for the purpose of putting off receiving a negative reward, i.e., if all rewards are negative then the agent will be incentivized to just keep taking thought actions rather than environment actions.
\begin{assumption}[Non-negative Rewards]
    $\forall s\in\sES, a \in \sA$, $r(s,a) \geq 0$.
\end{assumption}

\paragraph{Modeling of Time}
As in the discounted case, the fixed-horizon case raises questions about the passage of time.
Just as in the main paper we assumed that discounting is applied equally between thinking and non-thinking time-steps, here we assume that both thought and environment actions take a single time-step.
This assumption discourages thought actions as they cost the agent a chance to directly or indirectly obtain reward through an environment action.
Potentially, we could consider varying times for thought and environment actions, however, this raises additional complexity that we will not consider further here.
As before, the proposed model assumes that the environment state remains constant while the agent takes thought actions except for the passage of time. 
Again, for simplicity, we will focus on the static environment case, but note this issue as another interesting direction to refine the model.
Finally, we only consider time-homogenous finite-horizon thought MDPs. In time-inhomogenous MDPs, the state transition probabilities and reward can differ across time-steps. In a time-inhomogenous finite-horizon thought MDP, thought actions might subtly change the state of the environment for the agent by allowing it to wait for more favorable state transition probabilities. A possible resolution would be to not advance the time-step when the agent selects a thought action. However, that resolution could break the finite-horizon assumption. For simplicity, we will only consider the time-homogenous finite-horizon setting. 

\paragraph{Optimality}
Finite-horizon thought MDPs are MDPs with state space $\sES \times \sTS \times [l]$ and action space $\sEA \cup \sTA$ and consequently there will always be at least one deterministic optimal policy \citep{sutton_2018_reinforcement}.
Furthermore, we can show that this optimal policy will never take a thought action if we assume that environment actions are preferred to thought actions when the expected return is otherwise equal. The tie-breaking assumption means that thought actions are never strictly preferred to environment actions, though there may be situations when either type of action is equally preferred. Equal preference results in states from which it is impossible to obtain more future reward.
\begin{proposition}
    Assume that ties (w.r.t.\ expected reward-to-go) for the optimal action are broken in favor of environment actions.
    Then, any optimal policy, $\pi^\star$, for a thought MDP does not take thought actions:
    $\pi^\star(\es,\ts,t) \in \sEA$, $\forall \es \in \sES, \ts \in \sTS, t \in [l]$.
\end{proposition}
\begin{proof}
The proof is by induction.
First, as the base case, consider the final time-step $l-1$. Under the optimal policy, the agent receives $r(\es, \pi^*(\es, \ts, l-1))$ in $(\es,\ts)$ and then interaction terminates. Since $r(\es, \pi^*(\es, \ts, l-1))=0$ if $\pi^*(\ts, \ts, l-1) \in \sTA$ and $r(\es, \pi^*(\es, \ts, l-1)) \geq 0$ if $\pi^*(\ts, \ts,l-1) \in \sEA$ then a thought action will never be strictly preferred to an environment action. Under the assumption that ties are broken in favor of environment actions, the optimal policy will select an environment action.
Now, assume for arbitrary non-initial time-step $t > 0$ that the optimal policy only takes environment actions for all $(\es, \ts)$ and for all subsequent time-steps $t' \geq t$. 
We need to show that $\pi^*(\es, \ts, t - 1) \in \sEA$. We will prove this step by contradiction. Suppose that $\pi^*(\es, \ts, t - 1) \in \sTA$. Then, after the agent takes $\pi^*(\es, \ts, t - 1)$, it transitions to $(\es, {\ts}')$ and will then only take environment actions until termination (by our inductive hypothesis). However, if the agent changed $\pi^*(\es, \ts, t-1)$ to take $\pi^(\es, {\ts}', t)$ then it would get the same expected total reward over the next $l - t$ time-steps and end up in some state, ${\es}_{l-1}$ at time-step $l-1$ from which it would have one more chance to take an environment action and gain non-zero reward. Consequently, changing $\pi^*(\es, \ts, t - 1)$ to this action would produce at least as high a return, which implies that $\pi^*(\es, \ts, t - 1) \notin \sTA$ if $\pi^*$ is the optimal policy and ties are broken in favor of environment actions.

\end{proof}

\section{Policy Initialization Determines Emergence of Thinking}\label{sec:theory:finite-horizon}

Finally, we extend \Cref{theorem:improvement} to the finite-horizon setting.
\begin{theorem}\label{theorem:improvement:finite-horizon}
    Let $\pi$ be a policy in a thought MDP such that $\pi(\es,\ts, t) \in \sEA$ for some environment state $\es$, thought state $\ts$, and time-step, $t$.
    If the policy improvement step of policy iteration sets $\pi'(\es,\ts,t) \leftarrow \ta$ for $\ta \in \sTA$, then $v_\pi(\es,{\ts}',t+1) > v_\pi(\es, \ts,t)$, where ${\ts}'$ is the thought state resulting from taking $\ta$ in $(\es,\ts)$ at step $t$.
\end{theorem}
\begin{proof}
    Policy iteration sets $\pi'(\es, \ts, t) \leftarrow \arg\max_{\generica \in \sEA \cup \sTA} q_\pi(\es,\ts, t, \generica)$ where
    \[
q_\pi(\es, \ts, t, \generica) \coloneqq 
\begin{cases}
  r(\es,\generica) +  \mathbf{E}_{{\es}'}[v_\pi({\es}', \ts,t+1)] & \text{if } \generica \in \sEA \\
   v_\pi(\es, {\ts}', t+1)   & \text{if } \generica \in \sTA.
\end{cases}
\]
    Since the policy improvement step is selecting a thought action, we have that:
    \[
        \forall \ea \in \sEA, q_\pi(\es, \ts, t, \ea) <  v_\pi(\es,{\ts}',t+1).
    \]
    Since $\pi(\es, \ts, t)$ was in $\sEA$ before the update we have that:
    \[
        v_\pi(\es, \ts, t) = q_\pi(\es, \ts, t, \pi(\es,\ts, t)) <  v_\pi(\es, {\ts}', t+1).
    \]
    Thus, a thought action is selected when the policy is such that first changing the thought state to ${\ts}'$ leads to greater expected return than taking any environment action immediately, even though changing the thought state decreases the number of remaining time-steps in which the agent can obtain reward.
\end{proof}

\section{Example Prompts for LLM Experiment}\label{app:prompts}

Table \Cref{tab:prompts} provides an example of the prompts we use in our LLM thinking vs no-thinking experiments.
\begin{table}[h!]
\centering
\begin{tabular}{ll}
    \toprule
     \textbf{Condition} & \textbf{Prompt} \\
     \midrule
     No Thinking & Compute the sum: 5709 + 2890 + 4937 + 6482 + 6850 = \\
     \midrule
     Thinking & Compute the sum: 5709 + 2890 + 4937 + 6482 + 6850 =\\
     & \color{ForestGreen}{8599 + 4937 + 6482 + 6850 = 13536 + 6482 + 6850 = 20018 + 6850 =}\\
     \bottomrule
\label{tab:prompts}
\end{tabular}
\caption{Example ``Thinking" and ``No Thinking" prompts. The green text indicates the ``thinking tokens" appended to the original ``No Thinking" prompt.}
\end{table}

\section{Extended Empirical Description of Gridworld Experiments}\label{app:toy-domain}

This appendix provides additional details on our toy domain experiments.

\paragraph{Implementation Details}

We implement the Gridworld domain, pre-training, and reinforcement learning set-up in Python, using Pytorch \citep{Pytorch} for neural networks and gradient optimization.
All experiments are ran on a Macbook Air with an Apple M1 chip and 16GB of memory.
For both model pre-training and RL with REINFORCE we use the Adam optimizer \citep{kingma_adam_2015} with learning rates 1e-4 and 1e-5, respectively.
For REINFORCE, we do not use a value function baseline as we found it generally did not help because the sparseness of the reward led to poor value estimates that harmed early policy learning.
For the "NoThink" methods, we mask out the special actions by adding a large negative value to the logits for those actions before passing them to the softmax distribution.

\paragraph{Pre-trained Model Validation}
The objective of our experiment described in \Cref{sec:toy-domain} was to create a setting where model-free RL (in this case REINFORCE) would lead to thinking as a strategy for reward maximization.
A crucial piece of the set-up was to have a pre-trained model for which thought actions increase the agent's probability of solving the task.
Recall that solving the task requires the agent to navigate to the bottom right corner and then to reach the top left corner.
Pre-training is designed so that the agent outputting special action `A' increases the probability of it then taking the actions that lead to the bottom right and similarly so that `B' increases the probability of moving to the top left.
We validate the pre-training procedure by taking pre-trained models and forcing their first action to be `A' and their action upon reaching the bottom right for the first time to be `B'.
We find that the pre-training procedure leads to models where such prompting increases the probability of task success compared to simply rolling out the model.
If we further prompt the model this way and also mask thought actions on every other step then the pre-trained model will solve the task on virtually every episode.
This prompting and masking procedure confirmed that pre-training had primed the model for thinking to emerge as an effective strategy.

\paragraph{Pre-training with Misspecified Sub-Tasks}
In the Gridworld domain, we pre-train the agent's policy so as to associate the thought actions `A' and `B' with two sub-tasks that will be needed on the final evaluation task.
Naturally, one might wonder whether thinking will still emerge if the pre-trained sub-policies are misspecified for the final learning task.
To test this, we pre-train a model using the exact procedure described above but change the evaluation domain so that the complex task has each sub-goal location shifted by one cell.
Though a small change, it is a sufficient change so that the strategy discovered in our main task setting will no longer completely solve the task.
With this set-up, we test whether the agent can still leverage thinking to solve the task, even though triggering the pre-trained behaviors is insufficient by itself.
\Cref{fig:learn-to-think-misspecified} shows similar results as \Cref{fig:learn-to-think} in which the pre-trained method that uses thinking actions remains the only method to fully solve the task.
Interestingly, we no longer observe any runs of the pre-trained no-think agent successfully completing the task.

\begin{figure}[t]
    \centering
    % \begin{subfigure}[t]{0.3\textwidth}
    %     \centering
    %     \includegraphics[width=\textwidth,clip=true,trim=4cm -1cm 3.75cm 1.2cm]{figs/toy-domain-figure.png}
    %     \caption{Gridworld Domain}
    % \end{subfigure}
    % 
    \begin{subfigure}[t]{0.45\textwidth}
        \centering
        \includegraphics[width=\textwidth]{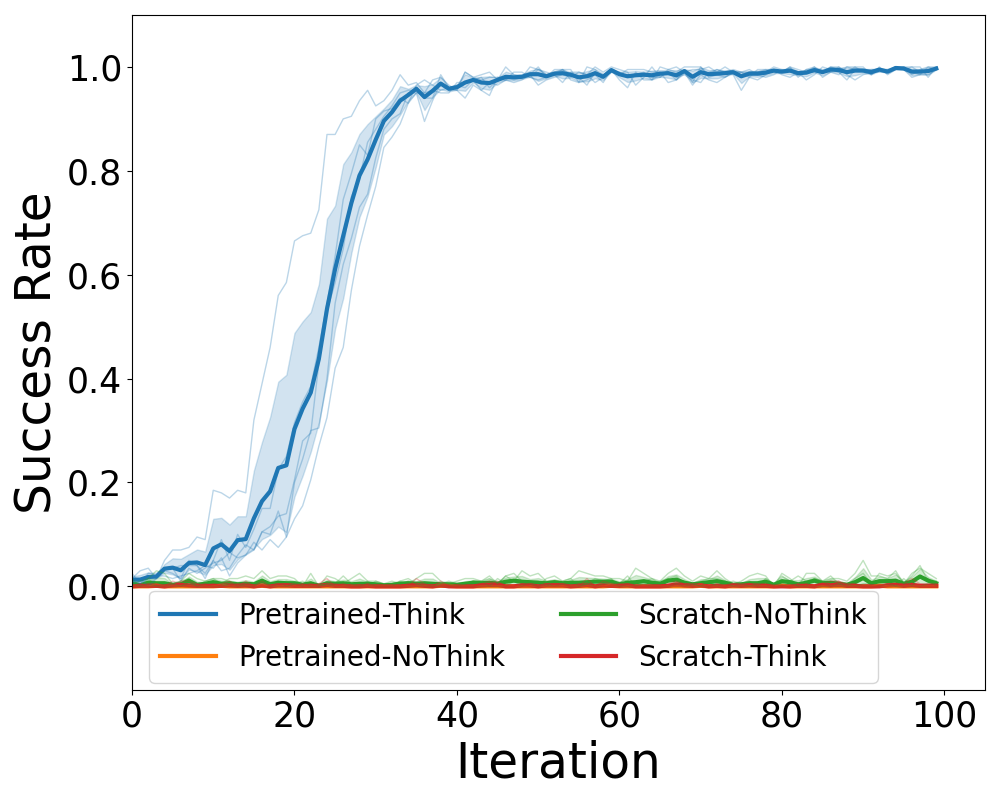}
        \caption{Success Rate}
    \end{subfigure}
    \begin{subfigure}[t]{0.45\textwidth}
        \centering
        \includegraphics[width=\textwidth]{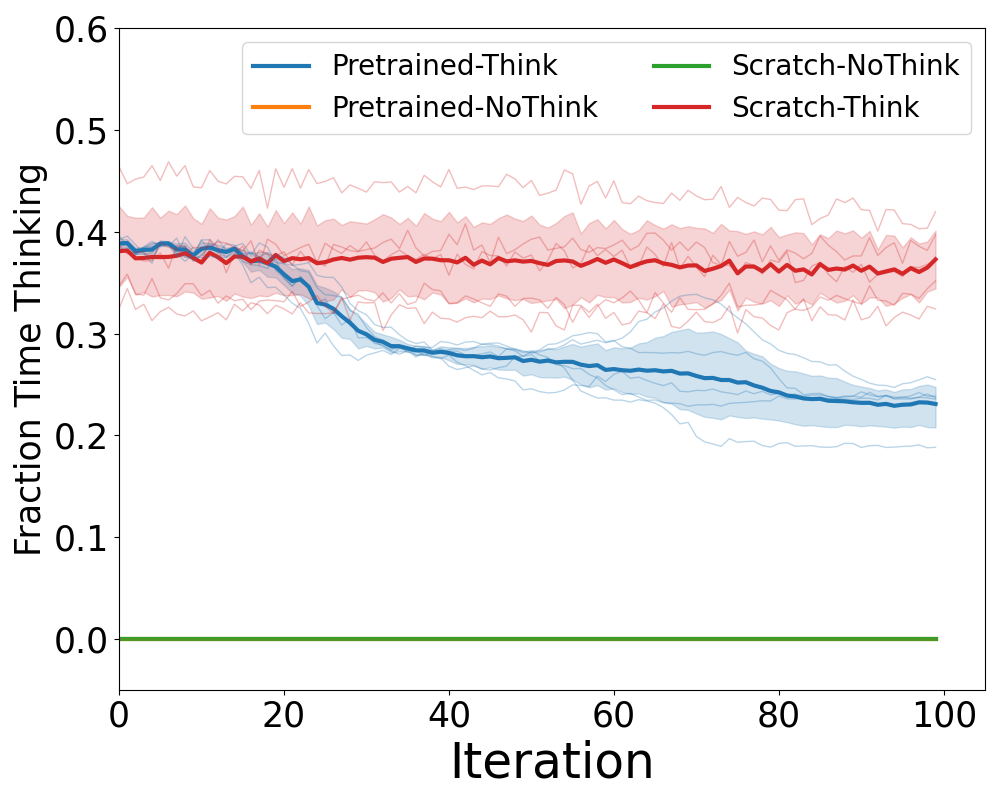}
        \caption{Thinking Time}
    \end{subfigure}
    \caption{Mean learning curves for the four agents we train in the gridworld environment with misspecified sub-tasks. The vertical axis gives the success rate for first navigating to the bottom right and then the top left corner. The horizontal axis is the iteration of policy improvement (200 episodes are collected at each iteration). We run 10 trials for each learning agent and shading indicates a 95\% bootstrap confidence interval. Light lines show individual training runs.}
    \label{fig:learn-to-think-misspecified}
\end{figure}

\end{document}